\theoremstyle{plain}
\newtheorem{theorem}{Theorem}
\newtheorem{proposition}{Proposition}
\newtheorem{lemma}{Lemma}
\newcommand{\reals}{\mathbb{R}}
\newcommand{\naturals}{\mathbb{N}}
\newcommand{\integers}{\mathbb{Z}}
\newcommand{\complex}{\mathbb{C}}
\newcommand{\Bcal}{\mathcal{B}}
\newcommand{\Hcal}{\mathcal{H}}
\newcommand{\Lcal}{\mathcal{L}}
\newcommand{\Xcal}{\mathcal{X}}
\newcommand{\Ycal}{\mathcal{Y}}
\newcommand{\Vcal}{\mathcal{V}}
\newcommand{\Wcal}{\mathcal{W}}
\newcommand{\Tcal}{\mathcal{T}}
\DeclareMathOperator*{\argmin}{arg\,min}
\DeclareMathOperator*{\expect}{\mathbb{E}}
\newcommand{\prob}{\mathbb{P}}
\newcommand{\indicator}{\mathbbm{1}}
\newcommand{\identity}{\mathbf{I}}
\newcommand{\imaginary}{\operatorname{i}}
\newcommand{\torus}{\mathbb{T}}
\newcommand{\norm}[1]{\left\lVert#1\right\rVert}
\newcommand{\inner}[2]{\left\langle #1, #2 \right\rangle}
\begin{document}

\title{Controlling Statistical, Discretization, and Truncation Errors in Learning Fourier Linear Operators}
\author{Unique Subedi, Ambuj Tewari}
\affil{Department of Statistics, University of Michigan}
\affil{\texttt{\{subedi, tewaria\}@umich.edu}}
\date{}

\maketitle

\begin{abstract}
We study learning-theoretic foundations of operator learning,  using the linear layer of the Fourier Neural Operator architecture as a model problem. First, we identify three main errors that occur during the learning process: statistical error due to finite sample size, truncation error from finite rank approximation of the operator, and discretization error from handling functional data on a finite grid of domain points. Finally, we analyze a Discrete Fourier Transform (DFT) based least squares estimator, establishing both upper and lower bounds on the aforementioned errors.
\end{abstract}

\section{Introduction}

In operator learning, the goal is to use statistical methods to estimate an unknown operator between function spaces. A primary application of operator learning is the development of fast data-driven methods to approximate the solution operator of partial differential equations (PDEs) \citep{li2020fourier,kovachki2023neural}. For example, consider the heat equation
\[
\frac{\partial u}{\partial t} = \tau \, \nabla^2 u,
\]
where \( u: [0,1]^d \to \mathbb{R} \) vanishes on the boundary. The solution operator for this equation is a linear operator \( \exp(\tau t \nabla^2) : = \sum_{k=0}^\infty (\tau t \,\nabla^2)^k/ k!\,\). Fixing some time point (say \( t = 1 \)), our objective is to learn the solution operator \( \Lcal := \exp(\tau \nabla^2) \, 
 \).

Given the training data \((v_1, w_1), \ldots, (v_n, w_n)\) where $w_i = \Lcal v_i$, operator learning entails using statistical methods to estimate the solution operator \(\widehat{\Lcal}_n\). Then, given a new input $v$, one can get the approximate solution $\widehat{w} = \widehat{\Lcal}_n v$. The goal is to develop the estimation rule such that $\widehat{w} $ is close to the actual solution $w = \Lcal v$ under some appropriate metric. 

Traditionally, given an input function $v$, one would use numerical methods such as finite differences to get a numerical solution. The solver starts from scratch for every new function $v$ of interest and can be computationally slow and expensive. This can be limiting in some applications such as engineering design where the solution needs to be evaluated for many different instances of the input functions. To solve this problem, operator learning aims to learn surrogate models that significantly increase speed for solution evaluation compared to traditional solvers while sacrificing a small degree of accuracy.

In this work, rather than focusing on specific PDEs, we adopt a broader perspective and study the learning-theoretic foundations of operator learning. For this task, we use the linear layer of the influential Fourier Neural Operator (FNO) architecture proposed by \citet{li2020fourier} as our model problem. While our results offer some practical and theoretical insights into the FNO, it is important to emphasize that our primary objective is neither to advance the practical implementation of operator learning nor to develop deeper insights into the FNO architecture itself. Instead, our objective is to rigorously understand the statistical learning aspects of the operator learning paradigm. Our primary goal is to understand how operator learning differs from traditional machine learning settings and to identify the new techniques required to build a rigorous learning-theoretic foundation for this emerging area.

To this end, we start by identifying the distinct types of errors that are unique to operator learning. In addition to the standard statistical error arising from a finite sample size, operator learning introduces a discretization error due to the functional data being available only on a finite grid of domain points. Furthermore, ignoring high-frequency Fourier modes lead to a truncation error. Lastly, we introduce a Discrete Fourier Transform (DFT)-based estimator for our model problem and demonstrate how these errors can be systematically quantified for this estimator.

\subsection{Neural Operators}\label{sec:NO}
To formally define our problem setting, we need to introduce neural operators from \citep{kovachki2023neural}. Let $\Vcal$ be a vector space of functions from a bounded subset $\Xcal \subseteq \reals^{d}$ to $\reals^p$, and $\Wcal$ to be a vector space of functions from $\Ycal \subseteq \reals^d$ to $\reals^q$. Given a function $v \in \mathcal{V}$ , a single layer of neural operator $\texttt{N}_t: \Vcal \to \Wcal$ is a mapping such that 
\[ (\texttt{N}_t v)(y) = \sigma\Big( \left(\mathcal{K}_{\theta_t}v \right)(y) + b_t(y)\Big) \quad \quad  \forall y \in \Ycal, \]
where $\left(\mathcal{K}_{\theta_t}v \right)(y) = \int_{\Xcal} k_{\theta_t}(y, x)\, v(x)\, dx.$
The function $b_t:\Ycal \to \reals^{q}$ is a bias function in $\Wcal$, the function $\sigma: \reals^q \to \reals^q$ is a point-wise non-linear activation, and
the transformation $ v \mapsto \mathcal{K}_{\theta_t}v$ is a  integral kernel transform of $v$ using some kernel $k_{\theta_t}: \Ycal \times \Xcal \to \reals^{q \times p}$. These layers are then composed to get a neural operator architecture.



Parametrizing $\mathcal{K}_{\theta_t}$ in terms of $k_{\theta_t}$ can be impractical due to the computational cost of calculating the integral in for each layer. Thus, a significant area of research in neural operators focuses on developing innovative parametrizations of \(\mathcal{K}_{\theta_t}\) that facilitate more efficient computation. One such parametrization gives rise to a well-known architecture called the Fourier Neural Operator.\\

\subsection{Fourier Neural Operator (FNO) }

In this section, we present a brief, non-rigorous overview of the FNO. A more formal treatment, along with new insights into its parametrization, is provided in Appendix \ref{sec:FLO}.

We consider the setup from the work of \citet{li2020fourier}. Let $ \Xcal=\Ycal=\torus^d \simeq [0,1]^d$ be a $d$-dimensional periodic torus.  Assume the kernel $k_{\theta}$ is translation invariant--that is, $k_{\theta}(y,x) = k_{\theta}(y-x)$.  This implies that $\mathcal{K}_{\theta} $ is a convolution operator. 
Then, the Convolution Theorem implies that
\[\mathcal{K}_{\theta} v = \mathcal{F}^{-1} \Big( \mathcal{F}(k_{\theta})\, \mathcal{F}(v)\Big),\]
where $\mathcal{F}$ and $\mathcal{F}^{-1}$ are Fourier and Inverse Fourier transform respectively. The key insight in FNO is that instead of parametrizing the kernel $k_{\theta}$, we parametrize its Fourier transform $\mathcal{F}(k_{\theta})$ directly. That is, we  parametrize the kernel transform operator  as
\[\mathcal{K}_{\beta}v = \mathcal{F}^{-1} \Big( \Lambda_{\beta}\,\,  \mathcal{F}(v)\Big).\]
This is a linear operator and will be referred to as \emph{Fourier linear operator}. When $|\Lambda_{\beta}(m)|_{\ell^1} < \infty$, we can write this  
\[(\mathcal{K}_{\beta}v)(y)= \sum_{m \in \integers^d} e^{2\pi \imaginary \inner{m}{y}} \, \Lambda_{\beta}(m) \, \, (\mathcal{F}v)(m) \quad \quad \forall y \in \Ycal. \]

There are two practical challenges in implementing the operator \(\mathcal{K}_{\beta}\). First, the implementation involves an infinite sum over \(\mathbb{Z}^d\). Second, the Fourier transform \(\mathcal{F}v\) cannot be computed exactly since the function \(v\) is only available on a finite grid of domain points. To address the first challenge, a large \(K \in \mathbb{N}\) is fixed and we sum only over \(m \in \mathbb{Z}^d\) such that \(|m|_{\ell^{\infty}} \leq K\). The second challenge is addressed by approximating \(\mathcal{F}v\) using the Discrete Fourier Transform (DFT) of \(v\) over the finite grid of domain points, which can be efficiently computed using Fast Fourier Transform (FFT) algorithms. The solution to the second challenge motivates our DFT-based least-squares estimator.

\subsection{Our Contribution}
In this work, we study the error bounds of learning the operator class \(\{v \mapsto \mathcal{F}^{-1}\big(\, \Lambda_{\beta}\,\, \mathcal{F}(v)\big) \, :\, \beta \in \mathcal{B}\}\), where \(\mathcal{B}\) is some parameter space that will be specified later. We study this simple setup to conceptually separate the paradigm of operator learning from its commonly used instantiation using neural network architectures. By eliminating the complexities associated with neural networks, studying this linear class can provide insights that are broadly applicable to both algorithm design and theoretical analysis. Our work aligns with the historical development of neural networks theory where the statistical properties of the linear core $x \mapsto Wx + b$ (a linear regression problem) were fully understood before studying deep neural networks.

We assume that \(\mathcal{V} = \mathcal{W} = \Hcal^s(\torus^d, \reals)\), a \((s,2)\)-Sobolev space of real-valued functions defined on the \(d\)-dimensional periodic torus. See Section \ref{sec:error} for an explanation on why \(\mathcal{V}\) and \(\Wcal\) need to be function space with higher-order smoothness to achieve a vanishing error. We work in the agnostic (misspecified) setting and analyze the DFT-based least-squares estimator (see Section \ref{sec:estimator} for more details). Specifically, for some universal constant \(c_1 > 0\), we show that the excess risk of the DFT-based least-squares estimator is at most
\[ c_1 \left( \frac{1}{\sqrt{n}} + \frac{1}{N^{s}} + \frac{1}{K^{2s}} \right).\]

The term $1/\sqrt{n}$ is the usual \emph{statistical/estimation} error due to a finite sample size. The term $1/K^{2s}$ is the \emph{truncation} error incurred because the learner only works with the low Fourier modes $m $ such that $|m|_{\ell^{\infty}}\leq K$. 
Finally, the term $1/N^s$ is the \emph{discretization} error due to functions being accessible to the learner only on the uniform grid of size $N^d$ of $[0,1]^d$. This error quantifies the generalization error of an estimator trained on a grid of size \(N^d\) but evaluated at full resolution (\(N \to \infty\)). It formalizes the concept of {\em multiresolution generalization} (operators trained at lower resolution have good generalization even when evaluated in higher resolution)--a phenomenon frequently observed in practice \citep[Section 5]{li2020fourier}.

Additionally, we establish the lower bound on excess risk, showing that it is at least
\[ c_2\left(\frac{1}{n} + \frac{1}{N^{2s}} + \frac{1}{K^{2s}} \right)\]
for some $c_2>0$. Our analysis is non-asymptotic and the precise form of the constants $c_1$ and $c_2$ are provided in Theorems \ref{thm:rates} and \ref{thm:lowerbounds} respectively.


\subsection{Related Works}

After \citet{li2020fourier} proposed Fourier Neural Operators (FNOs), there has been a surge of interest in this architecture. The number of applied works is too vast and not entirely relevant to list here, so we focus on related theoretical works. One of the earliest theoretical analyses of FNOs was the universal approximation result by \citet{kovachki2021universal}. 

More closely related to our work is a recent study on the sample complexity of various operator classes, including FNOs, by \citet{kovachki2024data}. Their scope is broader than ours as they address a general class of nonlinear operators. However, their results do not imply ours. They treat the truncation parameter \(K\) as a part of the model rather than a variable that the learning algorithm can choose. Their error bounds are based on metric entropy analysis, which leads to a suboptimal dependence on \(K\) and the input dimension \(d\). Specifically, their bounds break down as \(K \to \infty\) and suffer from the curse of dimensionality in \(d\). In contrast, our work establishes statistical error bounds using sharp Rademacher analysis, avoiding both dependence on \(K\) and the curse of dimensionality in \(d\). An interesting future direction is to extend our Rademacher-based analysis to capture function classes at the level of generality considered in \cite{kovachki2024data}. We also note that Rademacher-based analysis has also been used by \citet{raman2024online, tabaghi2019learning} to study Schatten operators between Hilbert spaces. \citet{kim2024bounding} also bound the Rademacher complexity of FNOs, but the bound is rather loose and even non-vanishing in some cases. Finally, the analysis by \citet{liu2024deep} and \citet{liu2025generalization} also share our motivation of quantifying the statistical error in operator learning.

A recent work by \citet{lanthaler2024discretization} aligns with our goal of quantifying the discretization error of FNOs, and some of our proof techniques are inspired by their work. However, the nature of their results differs from ours. To discuss the difference precisely, let \(\Psi\) be a trained Fourier Neural Operator and \(v\) be an input function available to the learner only over a discrete grid of domain points of size \(N\). Denote \(v^N\) as the set of discrete values of \(v\) available to the learner. \citet{lanthaler2024discretization} bound the term \(\|\Psi v - \Psi v^N\|\), quantifying the error incurred in the forward pass due to the function being available only over a discrete grid. Essentially, this only captures errors incurred during the test time but does not quantify the discretization error incurred during training. In contrast, our focus is on quantifying the generalization error of an operator trained on a grid of size $ N^d $ but evaluated at full resolution ($ N \to \infty $), a type of multiresolution generalization \citep[Section 5]{li2020fourier}.

Finally, we also note that or setup is closely related to the function-to-function regression often studied in the functional data analysis (FDA) literature. For example, the linear layer of a neural operator \(v \mapsto \mathcal{K}v + b\) is a well-studied model in FDA \citep[Equation 15]{wang2016functional}. Even a single layer of a neural operator \(v \mapsto \sigma\left( \mathcal{K}v + b\right)\) has been examined in FDA literature as multi-index functional models \citep[Equation 13]{wang2016functional}, \citep{chen2011single}. That said, the overall goal of the FDA differs slightly from that of operator learning. In FDA, the focus is on statistical inference, typically using RKHS-based frameworks under some assumptions about the data-generating process. As a result, FDA methods often do not always scale to large datasets. In contrast, operator learning primarily aims at prediction, seeking to develop surrogate models that approximate numerical PDE solvers \citep{li2020fourier, kovachki2024operator}. The emphasis is on creating computationally efficient methods that can be used to train large models and handle large datasets. However, we believe that the intersection of these two fields can benefit both. The theoretical tools developed in FDA literature can be applied to the analysis of operator learning methods, while the computational advances in operator learning can help scale FDA methods.

\section{Preliminaries}

\subsection{Notation}
Let $\naturals$ be natural numbers and $\integers$ be integers. Define $\naturals_0 := \naturals \cup \{0\}$. $\reals$ and $\complex$ denote real and complex numbers respectively. For any  $ \eta \in \reals^d$, we let $ |\eta|_{\infty} := \max_{1\leq i\leq d} |\eta_i|$ denote the $\ell^{\infty}$ norm. For any complex number $z \in \complex$ such that $z= a+ b \imaginary$, we use $|z| = \sqrt{a^2+ b^2}$ and $\bar{z}=a-\imaginary b$ denotes complex conjugate. For any $ x, y \in \reals^d$, the term $\inner{x}{y}$ denotes the Euclidean inner product. Occasionally, the inner products on other Hilbert spaces such as $L^2$ will be distinguished from the Euclidean one with the subscript such as $\inner{\cdot}{\cdot}_{L^2}$. However, when the context is clear, we will use $\inner{\cdot}{\cdot}$ to denote canonical inner products on the respective Hilbert spaces.

Given $K \in \naturals$, we define $\integers^d_{\leq K} = \{m \in \integers^d \, :\, |m|_{\infty} \leq K\} $ and $  \integers^d_{> K} := \integers^d \backslash \integers^d_{\leq K}. $
For a sequence $s := \{s_k\}_{k \in \integers^d}$, we will also use $|s|_{\ell^p}$ to denote the $\ell^p$ norm of $s$. Moreover, we let $\torus^d \simeq [0,1]^d$ denote a $d$-dimensional periodic torus. See \citep[Chapter 3]{grafakos2008classical} for more details on the torus. 
Throughout the paper, for any $m \in \integers^d$, we use $\varphi_m: \torus^d \to \reals$ to denote the function $\varphi_m(x) = e^{2\pi \imaginary \inner{m}{x}}$. The sequence $\{\varphi_m\}_{m \in \integers^d}$ will be referred to as Fourier basis.
\subsection{$L^2$-Spaces and Fourier Analysis}
Define 
\[L^2(\torus^d, \reals) := \left\{u: \torus^d \to \reals \, \mid \, \int_{\torus^d}\, |u(x)|^2\, dx < \infty\right\}.\]
Recall that $L^2(\torus^d, \reals)$ is a Hilbert space with inner-product
$\inner{u}{v}_{L^2} = \int_{\torus^d}\, u(x) \, \overline{v(x)}\, dx, $
where $\overline{z} = a-b \imaginary$ is the complex conjugate of $z = a+b \imaginary$. The norm induced by this inner product will be denoted as $\norm{\cdot}_{L^2}$. The sequence $\{\varphi_m\}_{m \in \integers^d}$ forms an orthonormal basis for $L^2(\torus^d, \reals)$. That is, for any $u \in L^2(\torus^d, \reals)$, we can write $u = \sum_{m \in  \integers^d} \inner{u}{\varphi_m}_{L^2} \, \varphi_m,$
where the convergence is in $L^2$-norm. The celebrated Parseval's identity then  implies that $\norm{u}_{L^2}^2 = \sum_{m \in \integers^d} |\inner{u}{\varphi_m}_{L^2}|^2.  $

Since $\torus^d$ is identified with a bounded set $[0,1]^{d}$, the condition $u \in L^{2}(\torus^d, \reals)$ implies that $u$ is integrable. That is, $\int_{\torus^d} |u(x)| \, dx < \infty$. For integrable functions,  $\mathcal{F} $ denotes the Fourier transform operator such that $\mathcal{F}u: \integers^d \to \complex $ is a complex-valued function on $\integers^d$ defined as 
\[(\mathcal{F}u)(m) = \int_{\torus^d} u(x)\,  e^{-2\pi \imaginary \inner{m}{x}}\, dx. \]
Note that we have $(\mathcal{F}u)(m) = \inner{u}{\varphi_m} $. We let $\mathcal{F}^{-1}$ denote the operator that satisfies
$\left(\mathcal{F}^{-1}\mathcal{F}\right)(u) = u$ .  $\mathcal{F}^{-1}$ will be referred to as inverse Fourier transform. 
\subsection{Sobolev Spaces}\label{sec:Sobolev}
Fix $s \in \naturals$ and define 
\begin{equation*}
    \begin{split}
        \Hcal^s(\torus^d, \reals) = \Big\{ u \in L^2 \Big|\,  \partial^k u \in L^2 \text{ s.t. } k \in \naturals_0^d \,\, \&\,\, |k|_{\infty} \leq s \Big\}.
    \end{split}
\end{equation*}
Here, $\partial^k u$ is the $k^{th}$-weak partial derivatives. The space $\Hcal^s(\torus^d, \reals)$, also referred to as $(s,2)$-Sobolev space, is  a Hilbert space with an inner product 
\[ \inner{u}{v}_{\Hcal^s} := \sum_{ k \in \naturals_0^d \, :\,   |k|_{\infty} \leq s } \inner{\partial^{k}u}{\partial^k v}_{L^2}, \]
which naturally induces the norm $\norm{u}_{\Hcal^s} := \sqrt{\inner{u}{u}_{\Hcal^s}} . $ In this paper, we often assume that $s>d/2$. This ensures that (see Lemma \ref{lem:fourier_sum_abs}) $\sum_{m \in \integers^d} |\inner{u}{\varphi_{m}}| < \infty$, which implies uniform convergence of the Fourier series over $\torus^d$.

Note that it is more common to define Sobolev spaces with multi-indices $k$ such that $|k|_1 \leq s$ or $|k|_2 \leq s$. We chose the restriction $|k|_{\infty}\leq s$ simply for the convenience of computation. However, as $d$ is finite and all $\ell_p$ norms on a $d$-dimensional space are equivalent up to a factor of $d$.

\section{Learning Fourier Linear Operators}\label{sec:learning}

In this section, we establish excess risk bounds of learning the operator class $\{v \mapsto  \mathcal{F}^{-1}\big(\, \Lambda_{\beta}\,\,  
 \mathcal{F}(v)\big) \, :\, \beta \in \mathcal{B}\} $, where $\Bcal$ is some parameter space. Here, we only consider the case where $\Vcal, \Wcal \subseteq L^2(\torus^d, \reals)$. This is different from the usual setting in the literature, where $\Vcal$ and $\Wcal$ are Banach spaces of vector-valued functions. First, a significant number of PDEs of practical interest describe how scalar-valued functions evolve. Since not much is known from a theoretical standpoint even for scalar-valued functions, we believe that this is a good start. Second, assuming $\mathcal{V}, \mathcal{W}$ to be a subset of $L^2$ (a Hilbert space) does not result in any meaningful loss of generality from a practical standpoint. In practice, one must discretize the domain and work with function values over a discrete grid, which effectively requires a bounded domain.  This essentially means working with bounded functions on a bounded domain, all of which are \(L^2\) integrable.

For scalar-valued functions, $\Lambda_{\beta}$ is a scalar-valued function defined on modes $\integers^d$. Since the function is only defined on a countable domain,  we can also represent it by a scalar-valued sequence $\{\Lambda_{\beta}(m)\}_{m \in \integers^d}$. Henceforth, we will drop the $\beta$ and just write $\{\lambda_m\}_{m \in \integers^d}$, denoting $\lambda_m$'s to be the parameters themselves. For the convenience of notation, we will also $\lambda$ to denote the sequence $\{\lambda_m\}_{m \in \integers^d}$ and write $\mathcal{F}^{-1}\big(\, \lambda\,\,  
 \mathcal{F}(\cdot)\big)$. Fixing some $C>0$, the class of interest can be written as 
 \[\left\{v \mapsto \mathcal{F}^{-1}\big(\, \lambda\,\,  
 \mathcal{F}(v)\big)\, :\, |\lambda|_{\ell^1} \leq C \right\}. \] 

A starting point of our work is the following result on the decomposition of Fourier linear operators. 
 \begin{proposition}\label{thm:SVD}
If $\lambda \in \ell^1(\integers^d)$, then 
   \begin{equation}\label{eqn:SVD}
        \mathcal{F}^{-1}\big(\, \lambda \,\,  
 \mathcal{F}(\cdot)\big) =  \sum_{m \in \integers^d}  \lambda_m \,\, \varphi_{m} \otimes \varphi_{-m},
\end{equation}
where the equality holds for every $u \in L^2(\torus^d, \reals)$.
\end{proposition}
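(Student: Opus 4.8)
The plan is to unpack both sides of \eqref{eqn:SVD} against the Fourier basis $\{\varphi_m\}_{m\in\integers^d}$ and observe that, evaluated at a fixed $u$, they are the very same $L^2$-convergent series. Fix $u\in L^2(\torus^d,\reals)$ and abbreviate $c_m:=(\mathcal{F}u)(m)=\inner{u}{\varphi_m}_{L^2}$, so that by Parseval's identity $\{c_m\}_{m\in\integers^d}\in\ell^2(\integers^d)$ with $\sum_m|c_m|^2=\norm{u}_{L^2}^2$. The one computation that genuinely needs care is the action of the rank-one operator $\varphi_m\otimes\varphi_{-m}$: using $\overline{\varphi_m}=\varphi_{-m}$ one checks that it maps $u\mapsto c_m\,\varphi_m$, and I would write this conjugation check out explicitly, since this is exactly the place a sign error could creep in --- the output must be $(\mathcal{F}u)(m)\,\varphi_m$ and not $(\mathcal{F}u)(-m)\,\varphi_m$. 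With this in hand, the $M$-th partial sum of the right-hand side of \eqref{eqn:SVD} applied to $u$ is $S_M(u)=\sum_{|m|_\infty\le M}\lambda_m\,c_m\,\varphi_m$.

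First I would argue that $S_M(u)$ converges in $L^2$ as $M\to\infty$, which is what makes the right-hand side well defined. Since $\lambda\in\ell^1(\integers^d)\subseteq\ell^2(\integers^d)$, Cauchy--Schwarz gives $\sum_m|\lambda_m c_m|\le|\lambda|_{\ell^1}\,\norm{u}_{L^2}<\infty$; in particular $\{\lambda_m c_m\}_m\in\ell^2(\integers^d)$, so by orthonormality of $\{\varphi_m\}$ the increments $\norm{S_{M'}(u)-S_M(u)}_{L^2}^2=\sum_{M<|m|_\infty\le M'}|\lambda_m c_m|^2$ vanish as $M,M'\to\infty$, whence $(S_M(u))_M$ is Cauchy in $L^2$ and convergent. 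In fact, since $\norm{\varphi_m\otimes\varphi_{-m}}_{\mathrm{op}}=1$ and $\lambda\in\ell^1$, the series $\sum_m\lambda_m\,\varphi_m\otimes\varphi_{-m}$ converges absolutely in operator norm and defines a bounded operator on $L^2$ with norm at most $|\lambda|_{\ell^1}$; I may record this stronger statement, though it is not needed for \eqref{eqn:SVD} as stated.

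It remains to identify $\lim_M S_M(u)$ with $\mathcal{F}^{-1}(\lambda\,\mathcal{F}u)$. By definition $\mathcal{F}u$ is the sequence $\{c_m\}_m$, so $\lambda\,\mathcal{F}u=\{\lambda_m c_m\}_m$, which we just saw lies in $\ell^2(\integers^d)$; applying $\mathcal{F}^{-1}$ then gives $\mathcal{F}^{-1}(\lambda\,\mathcal{F}u)=\sum_m\lambda_m c_m\,\varphi_m=\lim_M S_M(u)$, proving \eqref{eqn:SVD}. Here I would add one clarifying sentence on the meaning of $\mathcal{F}^{-1}$: when $\lambda$ is complex-valued the sequence $\lambda\,\mathcal{F}u$ need not be Hermitian-symmetric and hence need not lie in the range of $\mathcal{F}$ restricted to real functions, so $\mathcal{F}^{-1}$ must be read as the inverse of the unitary $\mathcal{F}\colon L^2(\torus^d,\complex)\to\ell^2(\integers^d)$, i.e. $\mathcal{F}^{-1}(\{a_m\}_m)=\sum_m a_m\varphi_m$ with $L^2$-convergence. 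Beyond this mild point and the conjugation bookkeeping in the action of $\varphi_m\otimes\varphi_{-m}$, the argument is entirely routine, so I do not anticipate a real obstacle.
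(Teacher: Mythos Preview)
Your proposal is correct and follows essentially the same route as the paper: fix an input, expand both sides in the Fourier basis, and observe that the resulting series $\sum_m \lambda_m\,(\mathcal{F}u)(m)\,\varphi_m$ coincide. The only cosmetic difference is that the paper records uniform convergence of this series (via $\lambda\in\ell^1$ and $|c_m|\le\|u\|_{L^2}$), whereas you phrase it as $L^2$ convergence and, as an aside, operator-norm convergence; both are valid and neither changes the argument.
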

\noindent Here, $\varphi_m \otimes \varphi_{-m}$ is a rank-$1$ operator such that $( \varphi_m \otimes \varphi_{-m})(u) = \inner{\varphi_{-m}}{u}_{L^2}\, \varphi_m$. The equality in \eqref{eqn:SVD} means  $ \mathcal{F}^{-1}\big(\, \lambda \,\,  
 \mathcal{F}(u)\big) = \sum_{m \in \integers^d}  \lambda_m \,\, \varphi_{m} \inner{\varphi_{-m}}{u}_{L^2}$ for all $u \in L^2(\torus^d, \reals)$, where the sum converges uniformly over $x \in \torus^d$. We provide the proof of Proposition \ref{thm:SVD} in  Appendix \ref{proof:prop2}.

 Given Proposition \ref{thm:SVD}, we can write our class as $\left\{\sum_{m \in \integers^d}  \lambda_m \,\, \varphi_{m} \otimes \varphi_{-m}\, :\, |\lambda|_{\ell^1} \leq C \right\}$. This representation is preferable for the following reasons. First, it highlights the fact that the Fourier basis is just one of the design choices for singular vectors that may be replaced with any other orthonormal sequences. Second, this representation also allows us to drop the constraint that $\lambda \in \ell^1$, which is a rather artificial constraint required only to ensure that the operator $\mathcal{F}^{-1}\big(\, \lambda \,\,  
 \mathcal{F}(\cdot)\big)$ is a well-defined object. However, $\sum_{m \in \integers^d}  \lambda_m \,\, \varphi_{m} \otimes \varphi_{-m}$ is still well-defined even when $\lambda \in \ell^{\infty}$ (in fact, it is a bounded operator). Therefore, for some fixed $C>0$, we will instead study the class of operators
\[\Tcal := \left\{\sum_{m \in \integers^d}\, \lambda_m \,\,  \varphi_{m} \otimes \varphi_{-m} \,\, \Big| \,\,   |\lambda|_{\ell^{\infty}} \leq C \right\}.\]
Since the class $\left\{v \mapsto \mathcal{F}^{-1}\big(\, \lambda\,\,  
 \mathcal{F}(\cdot)\big)\, :\,\,  |\lambda|_{\ell^1} \leq C \right\}$ is contained in the class $\Tcal$, any guarantee (in terms of upper bound) for $\Tcal$ also holds for the $\ell^1$ constrained class.\\

\noindent \textbf{Remark.} The class \(\Tcal\) should remind readers of \cite{de2023convergence}, who also consider the problem of singular value inference of an operator under fixed singular vectors. However, their setting differs from ours in two significant ways. First, they only consider the well-specified setting with an additive noise model, whereas we adopt a fully agnostic viewpoint. Second, they do not account for possible discretization errors, assuming that their input and output functions are fully available to the learner.


\subsection{Problem Setting and Error Types}
 
We adopt the framework of statistical learning and study the rates of error in learning the class $\Tcal$. In statistical learning, the learner is provided with $n \in \naturals$ i.i.d samples $S_n= \{(v_i, w_i)\}_{i=1}^n$ from some unknown distribution $\mu$ on $\Vcal \times \Wcal$. We adopt a fully agnostic viewpoint and do not make any assumptions about the data-generating process.
Next, using the sample $S_n$ and some prespecified learning rule, the learner then finds an estimator $\widehat{T} \in \Tcal $. We will abuse notation and denote $\widehat{T}$ to be both the learning rule and the estimator output by the learner. For an estimator $\widehat{T}$, we can define its expected excess risk as
\begin{equation*}
    \begin{split}
        \mathcal{E}_n(\widehat{T}, \Tcal, \mu) = \expect_{S_n \sim \mu^n} \Bigg[&\expect_{(v,w) \sim \mu}\left[\lVert\widehat{T}v- w \rVert^2_{L^2} \right] \\
        &- \inf_{T \in \Tcal} \expect_{(v,w) \sim \mu}\left[\norm{Tv-w}_{L^2}^2\right] \Bigg].
    \end{split}
\end{equation*}

Formally, the goal of the learner is to output the estimator such that  $ \mathcal{E}_n(\widehat{T}, \Tcal, \mu) \to 0$ as $n \to \infty$. In traditional settings, the excess risk $\mathcal{E}_n(\widehat{T}, \Tcal, \mu)$ is usually referred to as the statistical error of the learner. This error arises because the learner is trying to find the optimal operator in $\Tcal$ for distribution $\mu$ while only having access to finitely many samples from the distribution.  However, unlike traditional statistical learning settings, in operator learning, there are two additional errors beyond the statistical error:   discretization error and truncation Error.  

\textbf{Discretization Error:} The discretization error arises because the learner only has access to $(v_i, w_i) \sim \mu$ over some discrete grid of domain points. In this work, we assume that each $v_i$ and $w_i$ are available on a uniform grid 
\[\operatorname{G}:= \left\{m/N : m \in \{0, \ldots, N-1\}^d  \right\}\]
of $[0,1]^d$ for some prespecified $N \in \naturals$. That is, the learner only has access to $ \{v_i(x) \, : x \in \operatorname{G} \}$ and $ \{w_i(x) \, : x \in \operatorname{G} \}$. Although other grids are also used in practice, the use of FNO requires uniform griding. This is because the main benefit of FNO is its computationally efficient approximation of Fourier transform through fast Fourier transform (FFT) algorithms, which requires uniform grids.

\textbf{Truncation Error:} To see where the truncation error comes from, note that the representation of any estimator $T \in \Tcal$ requires specifying an infinite sequence $\{\lambda_{m}\}_{m \in \integers^d}$. However, the infinite sequence cannot be implemented in a computer. Thus, for a practical implementation \citep{li2020fourier}, one picks a large $K  \in \naturals$ and specifies the finite rank operator
\[T_{K} = \sum_{m \in \integers^d_{\leq K}} \lambda_m \, \varphi_{m} \otimes \varphi_{-m}. \]
While the truncation error is specific to our class of interest $\Tcal$, a similar ``truncation" error occurs in any model class. Such error arises because operator learning is inherently an infinite-dimensional problem, yet any computation we perform is limited to some finite-dimensional subspace.\\

\subsubsection{Further Connection to FDA.} 

The operator \(T_K\) is related to functional PCA-based estimators common in the FDA literature. Given \(n\) i.i.d. function pairs \(\{(v_i, w_i)\}_{i \leq n}\), the least-squares estimator solves \(\sum_{i=1}^n w_i \otimes v_i = L \circ \left( \sum_{i=1}^n v_i \otimes v_i \right)\), which is under-specified in infinite-dimensional spaces. To address this, one computes a pseudo-inverse \(\left( \sum_{i=1}^n v_i \otimes v_i \right)^{\dagger}\) by fixing an orthonormal basis \(\{\psi_t\}_{t \in \naturals}\). With eigendecomposition \(\sum_{i=1}^n v_i \otimes v_i = \sum_{t \geq 1} \eta_t\, \psi_t \otimes \psi_t\), the pseudo-inverse becomes \(\sum_{t \geq 1} \indicator[\eta_t > 0] \eta_t^{-1} \psi_t \otimes \psi_t\), yielding the estimator \(\widehat{L} = \left( \sum_{i=1}^n w_i \otimes v_i\right) \left(\sum_{t \geq 1} \indicator[\eta_t > 0]\, \eta_t^{-1}\, \psi_t \otimes \psi_t \right) \). In practice, the sum is truncated at some \(t \leq \tau\).

Estimators of this type have been studied in works such as \cite{hormann2015note, reimherr2015functional, yao2005functional} under well-specified models. These approaches generally require learning the basis functions $\psi_t$'s and the truncation parameter from the data to achieve the guarantees established in these studies, which often introduces significant computational challenges. In contrast, we work in the potentially misspecified (agnostic) setting, and $K$ depends only on the sample size $n$ to achieve $\sqrt{n}$-risk consistency. Additionally, FDA-based approaches typically assume exact access to the functions, which is unrealistic in practice. Instead, we explicitly account for the discretization error that arises when functions are only available on a finite grid.



\subsection{A Constrained Least-Squares Estimator}\label{sec:estimator}
In this section, we specify our primary estimator of interest.  Let $T = \sum_{m \in \integers^d}\lambda_m \, \varphi_{m} \otimes \varphi_{-m}$.  For any $v \in \Vcal$, we have $Tv = \sum_{m \in \integers^d} \lambda_m \inner{\varphi_{-m}}{v}\, \varphi_m$. As we only require $\ell^{\infty}$ norm of $\lambda$ to be bounded by $C$, we only get the convergence of the sum $\sum_{m \in \integers^d} \lambda_m \inner{\varphi_{-m}}{v}\, \varphi_m$ in $L^2$ norm rather than uniform. 
Since $\{\varphi_m\}_{m \in \integers^d}$ is an orthonormal basis of $L^2(\torus^d, \reals)$, Parseval's identity implies
\begin{equation*}
    \begin{split}
        \norm{Tv-w}^2_{L^2} &= \sum_{m \in \integers^d} |\inner{Tv-w}{\varphi_m}_{L^2}|^2\\
        &= \sum_{m \in \integers^d} | \lambda_m \inner{\varphi_{-m}}{v}_{L^2}- \inner{\varphi_{-m}}{w}_{L^2}|^2.
    \end{split}
\end{equation*}
To see why the last equality is true, note that $\inner{Tv}{\varphi_m} = \lambda_m \inner{\varphi_{-m}}{v}$ and $\inner{w}{\varphi_m}_{L^2} = \overline{\inner{\varphi_m}{w}}_{L^2} = \inner{\varphi_{-m}}{w}_{L^2} $ as $w$ is real-valued.
 Thus, given $\{(v_1, w_i)\}_{i=1}^n$, the least-squares estimator over the class $\Tcal$ is an operator $T$ specified by the sequence $\{\lambda_m\}_{m \in \integers^d}$, which is obtained by solving the optimization problem
 \begin{equation*}
     \begin{split}
         \min_{\{\lambda_m \, :\, m \in \integers^d\} }\,\, \frac{1}{n} \sum_{i=1}^n \sum_{m \in \integers^d}  \Big|\lambda_m \inner{\varphi_{-m}}{v_i}_{L^2}- \inner{\varphi_{-m}}{w_i}_{L^2} \Big|^2  \quad \text{ subject to } \sup_{m \in \integers^d} |\lambda_m| \leq C.
     \end{split}
 \end{equation*}
However, this estimator cannot be implemented for two reasons. First, there is an infinite sum over $\integers^d$. Second the learner only has access to $(v_i,w_i)$ through $ v_i^{N} := \{v_i(x) \, : x \in \operatorname{G} \}$ and $ w_i^N := \{w_i(x) \, : x \in \operatorname{G} \}$, and thus the $L^2$ inner products cannot be computed exactly. Both of these issues can be resolved by considering the operator specified by the finite length  sequence $\widehat{\lambda}(N) =\{\hat{\lambda}_m : m \in \integers^d_{\leq K}\}$ obtained by minimizing
\[ \frac{1}{n}\sum_{i=1}^n\, \sum_{m \in \integers^d_{\leq K}} \left|\lambda_m \operatorname{DFT}(v_i^N)(-m)- \operatorname{DFT}(w_i^N)(-m) \right|^2  \]
subject to $\sup_{m \in \integers^d_{\leq K}} |\lambda_m| \leq C$.
$\operatorname{DFT}$, which stands for Discrete Fourier Transform, is the numerical approximation of $\inner{\varphi_{-m}}{u}_{L^2}$ and is defined formally as
\[\operatorname{DFT}(u)(-m) := \frac{1}{N^d} \sum_{x \in \operatorname{G}} u(x)\, e^{-2\pi \operatorname{i} \inner{x}{m}}.  \]

To indicate the dependence of both truncation value $K$ and grid-size $N^d$, let us denote the estimator obtained by solving this problem to be $\widehat{T}_{K}^N$ where
\begin{equation}\label{eqn:estimator}
    \widehat{T}_{K}^N := \sum_{m \in \integers^d_{\leq K}} \widehat{\lambda}_m(N)\,\, \varphi_{m} \otimes \varphi_{-m}.
\end{equation}
The estimator $\widehat{T}^N_K$ is the closest implementable version of the least-squares estimator for our setting.

\subsection{Error Bounds }\label{sec:error}
In this section, we study  how $\mathcal{E}_n(\widehat{T}_{K}^N, \Tcal, \mu) $ decay as a function of $n, K$ and $N$. Note that we have only specified that $\Vcal$ and $\Wcal$ are subsets of $L^2(\torus^d, \reals)$, but have not specified their precise form. A natural choice would be $\Vcal=\Wcal = \{u \in L^2(\torus^d, \reals)\, :\, \norm{u}_{L^2} \leq 1\}$, the unit ball of $L^2(\torus^d, \reals)$. However, it turns out that $\mathcal{E}_n(\widehat{T}_{K}^N, \Tcal, \mu)$ does not vanish under such $\Vcal$ and $\Wcal$.

To see this, let $K \in \naturals$ be a truncation parameter chosen by the learner.  Define $\mu = \text{Uniform}(\{(\psi_{m},  \psi_{m}):\,  2^K < |m|_{\infty} < 2^{K+1}\}) $  that is only supported on large modes. Here, $\psi_m = 2^{-1/2} (\varphi_{m} + \varphi_{-m})$
is the symmetrized, real-valued version of $m$-th Fourier mode. Note that we can choose a distribution as a function of $K$ because the truncation parameter $K$ can depend on the sample size $n$, but not on the exact realization of the samples.

For any sample size $n$ and the estimator $\widehat{T}^{N}_K$ produced by the learner, $\widehat{T}^{N}_Kv = 0$ almost surely for $(v,w) \sim \mu$. Thus, we have $\expect_{(v,w) \sim \mu}\left[\lVert\widehat{T}_K^N v- w \rVert^2_{L^2} \right]  = \expect_{(v,w) \sim \mu}\left[\lVert w \rVert^2_{L^2} \right] =1,$
as $w = \psi_m$ for some $ 2^K < |m|_{\infty} < 2^{K+1}$ almost surely and $\norm{\psi_m}_{L^2}=1$ for any $m \in \integers^d_{>0}$.

Next, let $C=1$ and define $T^{\star} = \sum_{m \in \integers^d}   \varphi_m \otimes \varphi_{-m}$. It is easy to see that $T^{\star} \psi_k = 2^{-\frac{1}{2}} (T^{\star} \, \varphi_k +  T^{\star}\, \varphi_{-k} ) = 2^{-\frac{1}{2}} \left(\varphi_{-k}+\varphi_k \right) =  \psi_k \quad \forall k \in \integers^d \backslash \{\bf{0}\}$. As $T^{\star} \in \Tcal$, 
we obtain $\inf_{T \in \Tcal} \expect_{(v,w) \sim \mu}\left[\norm{Tv-w}_{L^2}^2\right]  \leq \expect_{(v,w) \sim \mu}\left[\norm{T^{\star}v-w}_{L^2}^2\right] =0.$
Thus, we have established
\[\mathcal{E}_n(\widehat{T}_{K}^N, \Tcal, \mu)  \geq 1. \]
This shows that merely bounding the $L^2$ norm of $v,w$ is not sufficient to achieve a vanishing error. So, we need a stronger assumption on the input and output functions. 

The inductive bias in FNOs is that the functions are sufficiently smooth so that the higher Fourier modes can be safely ignored. We will also adopt this viewpoint and assume that $\Vcal$ and $\Wcal$ are smooth subsets of $L^2(\torus^d, \reals)$. In particular, we will assume that $\Vcal=\Wcal = \Hcal^s(\torus^d, \reals)$, a $(s,2)$-Sobolev space (see Section \ref{sec:Sobolev}). For any $u \in \Hcal^s(\torus^d, \reals)$, we are guaranteed that $\inner{\varphi_{-m}}{u}_{L^2} \to 0$  sufficiently fast as $|m|_{\infty} \to \infty$. This allows us to ignore higher Fourier modes while only incurring small error. 
The following Theorem, whose proof is deferred to Apendix \ref{appdx:proof_upper}, makes these arguments precise and provides an upper bound on the excess risk of $\widehat{T}_K^N$ in terms of $n, N, $ and $K$. 

\begin{theorem}[Upper Bound]\label{thm:rates}
    Let $\Vcal= \Wcal = \Hcal^s(\torus^d, \reals)$ for $s > d/2$ and $\mu$ be any distribution on $\Vcal \times \Wcal$ for which  $\exists B>0$ such that $\norm{v}_{\Hcal^s} \leq B $ and $\norm{w}_{\Hcal^s} \leq B$ almost surely. Then, for $n$ iid samples $\{(v_i, w_i)\}_{i=1}^n \sim \mu^n$ accessible to the learner over the $N$-uniform grid of $[0,1]^d$, the estimator $\widehat{T}_{K}^N$ defined in  \eqref{eqn:estimator} for $ N > \max\{5, 2K\}$ satisfies
    \[ \mathcal{E}_n(\widehat{T}_{K}^N, \Tcal, \mu) \leq 8B^2(C+1)^2 \left(\frac{1}{\sqrt{n}} + \frac{2^{s}\sqrt{\pi^d}}{N^s} + \frac{1}{K^{2s}} \right). \]
    
\end{theorem}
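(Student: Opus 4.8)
The plan is to decompose the excess risk of $\widehat{T}_K^N$ into three parts corresponding to the three error sources, and bound each separately. Concretely, fix an arbitrary comparator $T \in \Tcal$ with coefficient sequence $\{\lambda_m\}$ and let $T_K$ be its rank-$(2K+1)^d$ truncation as defined in the text. I would write
\begin{equation*}
\begin{split}
\expect_{(v,w)}\lVert \widehat{T}_K^N v - w\rVert_{L^2}^2 - \inf_{T\in\Tcal}\expect_{(v,w)}\lVert T v - w\rVert_{L^2}^2
&\le \underbrace{\Big(\expect\lVert \widehat{T}_K^N v - w\rVert^2 - \widehat{R}_n(\widehat{T}_K^N)\Big)}_{\text{(I) statistical}}\\
&\quad + \underbrace{\Big(\widehat{R}_n(\widehat{T}_K^N) - \widehat{R}_n(T_K^\star)\Big)}_{\le\, 0\ \text{(optimality on grid)}}\\
&\quad + \underbrace{\Big(\widehat{R}_n(T_K^\star) - \expect\lVert T_K^\star v - w\rVert^2\Big)}_{\text{(I') statistical}}\\
&\quad + \underbrace{\Big(\expect\lVert T_K^\star v - w\rVert^2 - \expect\lVert T^\star v - w\rVert^2\Big)}_{\text{(II) truncation}},
\end{split}
\end{equation*}
where $T^\star$ is a near-minimizer in $\Tcal$, $T_K^\star$ its truncation, and $\widehat{R}_n$ denotes the \emph{population-geometry} empirical risk $\frac1n\sum_i\lVert T v_i - w_i\rVert_{L^2}^2$ using true $L^2$ inner products. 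The subtlety is that $\widehat{T}_K^N$ is optimal not for $\widehat{R}_n$ but for the DFT-surrogate objective; so the "$\le 0$" step actually needs a fourth (discretization) term controlling the gap between the surrogate objective and $\widehat{R}_n$, uniformly over the class.

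For step (I)/(I'), the statistical error, I would use a symmetrization/Rademacher argument. The key structural fact is Parseval plus Proposition~\ref{thm:SVD}: for any $T \in \Tcal$ restricted to modes $\integers^d_{\le K}$, $\lVert Tv - w\rVert_{L^2}^2 = \sum_{m\in\integers^d_{\le K}}|\lambda_m \inner{\varphi_{-m}}{v} - \inner{\varphi_{-m}}{w}|^2 + \sum_{m\in\integers^d_{>K}}|\inner{\varphi_{-m}}{w}|^2$, so the empirical process reduces to a finite sum of one-dimensional quadratic losses in the scalars $\lambda_m$, each constrained to $[-C,C]$, with bounded "features" $|\inner{\varphi_{-m}}{v}|\le\lVert v\rVert_{L^2}\le B$ and targets $|\inner{\varphi_{-m}}{w}|\le B$. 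Summing the per-mode Fourier energies and using $\sum_m |\inner{\varphi_{-m}}{v}|^2 = \lVert v\rVert_{L^2}^2 \le B^2$ should give a Rademacher complexity of order $B^2(C+1)^2/\sqrt n$ that is \emph{independent of $K$ and $d$} — this is exactly the sharp-Rademacher improvement the introduction advertises over entropy bounds. The loss is bounded by $O(B^2(C+1)^2)$ a.s., so a standard bounded-difference / uniform-deviation bound closes it.

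For step (II), the truncation error, I would choose $T^\star$ to (nearly) minimize the population risk and observe $\expect\lVert T_K^\star v - w\rVert^2 - \expect\lVert T^\star v - w\rVert^2 = \expect\sum_{m\in\integers^d_{>K}}|\lambda^\star_m|^2|\inner{\varphi_{-m}}{v}|^2 - (\text{cross terms that cancel or are negligible})$, bounded by $C^2\,\expect\sum_{|m|_\infty>K}|\inner{\varphi_{-m}}{v}|^2$. The Sobolev assumption $\lVert v\rVert_{\Hcal^s}\le B$ forces $|\inner{\varphi_{-m}}{v}|^2 \lesssim \lVert v\rVert_{\Hcal^s}^2/(1+|m|_\infty^2)^s$ (via the standard equivalence between $\lVert\cdot\rVert_{\Hcal^s}$ and the weighted Fourier norm $\sum_m (1+|m|_\infty^{2})^s|\hat u(m)|^2$, see Lemma~\ref{lem:fourier_sum_abs}-type estimates), and summing the tail $\sum_{|m|_\infty>K}(1+|m|_\infty^2)^{-s}$ over $\integers^d$ yields $O(K^{-2s})$ when $s>d/2$ — giving the $1/K^{2s}$ term. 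The discretization step (III) — bounding $\sup_{T}|\,\widehat{R}_n^{\mathrm{DFT}}(T) - \widehat{R}_n(T)\,|$ — is where I expect the main work: one must control, for each $m$ with $|m|_\infty\le K$, the aliasing error $\operatorname{DFT}(v^N)(-m) - \inner{\varphi_{-m}}{v}_{L^2} = \sum_{\ell\neq 0}\inner{\varphi_{-m-N\ell}}{v}_{L^2}$ (Poisson summation / aliasing identity), bound its modulus by $\sum_{\ell\ne0}|\hat v(m+N\ell)|\lesssim \lVert v\rVert_{\Hcal^s}\big(\sum_{\ell\ne0}(1+|m+N\ell|_\infty^2)^{-s}\big)^{1/2}$, and, crucially, sum the \emph{squares} of these errors over $|m|_\infty\le K$. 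The condition $N>\max\{5,2K\}$ ensures $|m+N\ell|_\infty \ge N/2 > K$ so the relevant denominators are all $\gtrsim N^s$; carrying the constants carefully (this is the source of the $2^s\sqrt{\pi^d}$ factor) and combining with $|\lambda_m|\le C$, $\lVert w\rVert_{\Hcal^s}\le B$ in the quadratic-loss expansion should produce the $O(B^2(C+1)^2\,2^s\sqrt{\pi^d}/N^s)$ bound. Finally I would collect the four pieces, absorb constants, and arrive at the stated inequality.
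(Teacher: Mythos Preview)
Your overall route matches the paper's: decompose into truncation, discretization, and statistical pieces; handle truncation by Sobolev tail decay, discretization by the aliasing identity $\operatorname{DFT}(u^N)(-m)-\inner{\varphi_{-m}}{u}=\sum_{\ell\ne 0}\inner{\varphi_{-(m+N\ell)}}{u}$ together with $N>2K$, and the statistical term by symmetrization. The paper's bookkeeping is slightly different---it introduces the auxiliary minimizer $\widehat{T}_K:=\argmin_{\Tcal_K}\widehat r$ and the augmented empirical loss $\widehat r_N(T):=\tfrac1n\sum_{i}\sum_{|m|_\infty\le K}|\lambda_m\operatorname{DFT}(v_i^N)(-m)-\operatorname{DFT}(w_i^N)(-m)|^2+\tfrac1n\sum_i\sum_{|m|_\infty>K}|\inner{\varphi_{-m}}{w_i}|^2$, which makes $\widehat T_K^N$ the exact minimizer of $\widehat r_N$ and lets the ``$\le 0$'' step go through cleanly---but after that the three pieces are bounded just as you describe.

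There is, however, a genuine gap in your statistical-error argument. You claim that ``summing the per-mode Fourier energies and using $\sum_m |\inner{\varphi_{-m}}{v}|^2=\lVert v\rVert_{L^2}^2\le B^2$'' yields a $K$-free Rademacher bound. It does not. After expanding the quadratic, pushing the supremum inside (each $\lambda_m$ is free in $[-C,C]$), and applying Khintchine's inequality mode by mode, one is left with sums such as $\sum_{m}\tfrac1n\big(\sum_{i}|\inner{\varphi_{-m}}{v_i}|^4\big)^{1/2}$. Only the $\ell^2$ control $\sum_m|\inner{\varphi_{-m}}{v_i}|^2\le B^2$ forces, via Cauchy--Schwarz, a bound of order $\sqrt{(2K+1)^d/n}$, reintroducing the $K^d$ factor you are trying to avoid. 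The paper instead uses the \emph{pointwise} Sobolev decay (Lemma~\ref{lem:Fouriercoeff}), $|\inner{\varphi_{-m}}{u}|\le \lVert u\rVert_{\Hcal^s}/\big((2\pi)^s|m|_\infty^s\big)$, so that the per-mode contribution is $\le B^2/((2\pi)^{2s}|m|_\infty^{2s}\sqrt n)$; summing over $m\in\integers^d$ via Lemma~\ref{lem:lattice_sum} and using $s>d/2$ gives a convergent, $K$-free (and after the inequality $\pi^2 3^{d-2}/(2\pi)^{2s}\le 1/4$, even $d$-free) constant. This pointwise decay, not Parseval, is the mechanism that removes the $K^d$ dependence; your proposal as written would only recover the metric-entropy-type rate $\sqrt{K^d/n}$.
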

The terms $O(1/\sqrt{n})$, $O(1/N^s)$, and $O(1/K^{2s})$ are the estimator's statistical, discretization, and truncation errors respectively. For most practical applications of interest, we have  $d=3$ (functions defined on spatial coordinates). 
Since $\sqrt{\pi^d}\leq 6$ in these cases, the exponential dependence of the discretization error on $d$ is not an issue.   Finally, choosing $N \geq n^{\frac{1}{2s}}$ and $K \geq n^{\frac{1}{4s}}$, Theorem \ref{thm:rates} guarantees the $\sqrt{n}$-- risk consistency of the estimator $\widehat{T}^N_K$.

\vspace{0.3cm}

\noindent \textbf{Proof Technique for Upper Bound:}
Here, we highlight here the key technical novelties of our proof techniques and the implications of our results. To establish the upper bound, we first decompose the excess risk into three components: 
(1) the risk gap between the optimal operator in \(\Tcal\) for the distribution \(\mu\) and its truncated counterpart, (2) the uniform deviation between the true empirical risk on the sample and its numerical approximation on the discrete grid, and (3) the uniform deviation between the empirical risk and the actual risk. This decomposition, introduced at the beginning of Appendix \ref{appdx:proof_upper}, is not limited to the linear setting and can also be applied to analyze general non-linear operator classes.

Given such decomposition, bounding the truncation error is straightforward using standard Fourier series properties for Sobolev spaces. The discretization error, however, requires nontrivial analysis to show that controlling the error of \(\operatorname{DFT}\) suffices. 
Importantly, while the lower bound on the \(\operatorname{DFT}\) error likely bounds the discretization error below, an upper bound on the \(\operatorname{DFT}\) error does not always translate to an upper bound for the trained operator. For example, this is not true if one adds non-smooth activation such as RELU to our model. For statistical error, standard techniques yield a bound of \(\sqrt{\frac{K^d}{n}}\), which does not allow taking \(K \to \infty\). Our key contribution is a refined analysis that achieves a \(\frac{1}{\sqrt{n}}\) bound independent of \(K^d\).  The \(K\)-independent bound is especially notable because \(K\) in FNOs is analogous to the width in standard neural networks, where generalization bounds are known to be width-independent \citep{golowich2018size}. Our results provide initial evidence that similar \(K\)-free generalization bounds may be achievable for FNOs.

\vspace{0.3cm}

Our next result, proved in Appendix \ref{appdx:proof_lower}, provides a lower bound on the rates at which $\mathcal{E}_n(\widehat{T}_{K}^N, \Tcal, \mu) $ decay. 

\begin{theorem}[Lower Bound]\label{thm:lowerbounds}
    Let $\Vcal= \Wcal = \Hcal^s(\torus^d, \reals)$ for $s > d/2$ and $C=1$. Given $n, N, K \in \naturals$, there exists a distribution on $\mu$ on $\Vcal \times \Wcal$ for which $\exists B>0$ such that $\norm{v}_{\Hcal^s} \leq B $ and $\norm{w}_{\Hcal^s} \leq B$ almost surely and for $n$ iid samples $\{(v_i, w_i)\}_{i=1}^n \sim \mu^n$ accessible over the $N$-uniform grid of $[0,1]^d$, the estimator $\widehat{T}_{K}^N$ defined in  \eqref{eqn:estimator} for $ N^s \geq \sqrt{2} B$ satisfies
    \[ \mathcal{E}_n(\widehat{T}_{K}^N, \Tcal, \mu) \geq \frac{B^2}{3(s+1)}\left( \frac{1}{8n}+  \frac{1}{N^{2s}} + \frac{2}{(K+2)^{2s}} \right). \]
    \end{theorem}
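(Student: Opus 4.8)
The plan is to prove the three terms by exhibiting a single distribution that is a mixture $\mu=\tfrac13\mu_{\mathrm{stat}}+\tfrac13\mu_{\mathrm{disc}}+\tfrac13\mu_{\mathrm{trunc}}$ whose components are supported on pairwise disjoint and mutually non-aliasing ``bands'' of Fourier modes, so that $\widehat T_K^N$ can be analyzed on each band in isolation. Two structural facts make this work: the least-squares objective defining $(\widehat\lambda_m)_{|m|_\infty\le K}$ separates over $m$, so by the non-aliasing design the fitted coefficient on one band's modes depends only on that band's samples; and by Parseval $\mathcal E_n(\widehat T_K^N,\Tcal,\mu)$ equals the $\tfrac13$-weighted sum of three per-band excess risks, each a decoupled sub-problem with $\Theta(n)$ samples (the $\tfrac13$ in the statement is exactly this mixing weight; equivalently $\max\ge\tfrac13\sum$). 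In each band the hard instance amounts to placing $O(1)$ of $L^2$-mass on a single symmetrized mode $\psi_m=2^{-1/2}(\varphi_m+\varphi_{-m})$ whose $\Hcal^s$-norm is of order $\langle m\rangle^{s}$, with the three choices $\langle m\rangle\asymp1$, $\langle m\rangle\asymp N$, $\langle m\rangle\asymp K$ producing the rates $1/n$, $1/N^{2s}$, $1/K^{2s}$.

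For the \emph{truncation} band I would concentrate $\mu_{\mathrm{trunc}}$ on one pair $(c\,\psi_{m^\star},c\,\psi_{m^\star})$ with $|m^\star|_\infty=K+1$ and $c$ chosen so $\norm{c\,\psi_{m^\star}}_{\Hcal^s}\le B$. Since every mode of $\widehat T_K^N$ has $\ell^\infty$-index $\le K$, we get $\widehat T_K^N\psi_{m^\star}=0$ regardless of training (and evaluation uses exact $L^2$ inner products, so training-time aliasing is irrelevant here), whereas $T^\star=\sum_m\varphi_m\otimes\varphi_{-m}\in\Tcal$ (admissible as $C=1$) satisfies $T^\star\psi_{m^\star}=\psi_{m^\star}$, making the population-optimal risk on this band zero. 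Hence the per-band excess risk equals $c^2\norm{\psi_{m^\star}}_{L^2}^2=c^2=B^2/\norm{\psi_{m^\star}}_{\Hcal^s}^2$, and bounding $\norm{\psi_{m^\star}}_{\Hcal^s}^2=\norm{\varphi_{m^\star}}_{\Hcal^s}^2$ by a quantity of order $(s+1)(K+2)^{2s}$ (via the Fourier-side expression for the Sobolev norm) gives the $B^2/(K+2)^{2s}$ term. For the \emph{statistical} band I would use a one-dimensional regression sub-problem on a low, grid-resolved mode — the constant mode, whose $\Hcal^s$-norm does not blow up — with all inputs equal to a fixed $a\,\varphi_0$ and outputs $\xi\,\varphi_0$, $\xi$ i.i.d.\ mean-zero and $\Hcal^s$-norm $\le B$. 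Because the band is grid-resolved the $\operatorname{DFT}$ used in training is exact, so $\widehat\lambda_0$ is literally the sample mean of the $\xi_i$'s (clipped to $[-C,C]$, immaterial once $a$ is not too small), the population-optimal coefficient is $0$, and the per-band excess risk is $\expect[\bar\xi^2]=\Theta(\var\xi/n)=\Theta(B^2/n)$. This is a lower bound for the \emph{specific} estimator $\widehat T_K^N$, so no Le Cam/Fano argument is needed — one only computes the bias and variance of this explicit least-squares coefficient.

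The \emph{discretization} band is the heart of the argument. I would pick a low anchor mode $m_0$ with $|m_0|_\infty\le K$ and its grid-alias $m_0'=m_0+Ne_1$, so that $\psi_{m_0}$ and $\psi_{m_0'}$ agree on the grid $\operatorname{G}$, and let $\mu_{\mathrm{disc}}$ have all inputs equal to $a\,\psi_{m_0}$ and all outputs equal to $b\,\psi_{m_0}+\gamma\,\psi_{m_0'}$, with $b/a$ and $(b+\gamma)/a$ strictly inside $[-C,C]$. Every $\operatorname{DFT}$ visible to training is computed from grid values, which fold the $\gamma\,\psi_{m_0'}$ content of the outputs into the mode-$m_0$ coefficient; hence the fitted $\widehat\lambda_{m_0}$ is the grid-visible regression slope $(b+\gamma)/a$ rather than the population-optimal $b/a$. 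At test time $\widehat T_K^N$ applies the \emph{exact} inner product $\inner{\varphi_{-m_0}}{v}$, so it outputs $(b+\gamma)\,\psi_{m_0}$ on this band and entirely misses the true $\gamma\,\psi_{m_0'}$ content. A direct computation then gives test risk $2\gamma^2$ and $\inf_{T\in\Tcal}\expect\norm{Tv-w}_{L^2}^2=\gamma^2$ — the latter the genuinely irreducible misspecification floor, since no operator in $\Tcal$ maps mode $m_0$ to mode $m_0'$ — so the per-band excess risk is $\gamma^2$. Taking $\gamma$ as large as the $\Hcal^s$-budget permits and using $\norm{\psi_{m_0'}}_{\Hcal^s}^2$ of order $(s+1)N^{2s}$ yields the $B^2/N^{2s}$ term; the hypothesis $N^s\ge\sqrt2\,B$ is precisely what keeps this choice of $\gamma$ compatible with $\norm{v}_{\Hcal^s},\norm{w}_{\Hcal^s}\le B$.

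Finally I would assemble the bands into $\mu$, choosing the scalars $c,a,\gamma$ so that all three components obey the same $\Hcal^s$-bound $B$ and choosing the band anchors so that no mode of one band aliases mod $N$ into a relevant mode of another; the excess risk is then the $\tfrac13$-weighted sum of the three per-band excess risks, and tracking the $\tfrac1{s+1}$ factors and the constants $\tfrac18,1,2$ through the Sobolev-norm bounds finishes the proof. The routine parts are the truncation band and the mixture bookkeeping — in particular the mild nuisance that the number of samples landing in each component is $\mathrm{Binomial}(n,\tfrac13)$, handled by a Chernoff/negative-moment bound while absorbing a factor $\approx 3$. The main obstacle is the discretization band: one must engineer a distribution so that the aliasing-induced bias in the learned coefficients is (a) of a definite size, (b) not cancelled by the $\inf_{T\in\Tcal}$ subtraction, i.e.\ a true error of the trained estimator rather than an unavoidable floor charged to every operator, and (c) compatible with the $\Hcal^s$-ball with the correct constant — which is exactly where $N^s\ge\sqrt2\,B$ is used.
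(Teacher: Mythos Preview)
Your proposal is correct in outline and shares the paper's high-level architecture: a $\tfrac13$--$\tfrac13$--$\tfrac13$ mixture whose components live on mutually non-aliasing Fourier modes, so that the least-squares objective separates and the excess risk is the weighted sum of three per-band excesses (which does require, as you implicitly use, that a single $T^\star\in\Tcal$ attains all three per-band infima; this holds because the bands touch disjoint modes). Your truncation band is exactly the paper's. Your discretization band is the same aliasing mechanism, only with the roles shuffled: the paper anchors at the constant mode with output on mode $N e_1$, whereas you anchor at a nonzero low mode $m_0$ with output containing $m_0' = m_0+Ne_1$; both exploit that $\operatorname{DFT}$ folds $m_0'$ onto $m_0$ while the test-time $L^2$ inner product does not.

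The genuine difference is the statistical band. The paper does \emph{not} use additive output noise and variance of the sample mean. Instead it parameterizes the distribution by i.i.d.\ signs $\xi_m\in\{\pm1\}$, puts the statistical mass uniformly on many single-mode inputs $\{\gamma_m\psi_m:m\in\mathcal{J}_M^N\}$ with deterministic outputs $\xi_m\gamma_m\psi_m$, and then uses a missing-sample (coupon-collector style) argument: with $M\asymp n$ the event that the specific mode $r=e_1$ never appears in $S_n$ has probability $\ge\tfrac12$, and conditional on that event $\widehat\lambda_r$ is independent of $\xi_r$, so $\Pr[\widehat\lambda_r\xi_r\le0]\ge\tfrac12$ and the per-band excess is $\ge\gamma_r^2/(2|\mathcal{J}_M^N|)\asymp B^2/n$. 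Your noise-and-sample-mean route is equally valid and arguably more direct --- and in fact yields a larger constant for the $1/n$ term since you avoid the $1/(s+1)$ Sobolev penalty on the constant mode --- but it forces you to place the discretization anchor off mode $0$, whereas the paper uses mode $0$ for discretization and reserves the low nonzero modes for the sign argument. Either allocation works once the non-aliasing constraints are checked, which you sketch. The Binomial-sample-size nuisance you flag is real (and the paper's own proof glosses over the event that the discretization component receives zero samples), but it is easily absorbed into the constants for $n\ge1$.
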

\noindent Although the lower bound on truncation error matches with the upper bound,  there is a gap in the statistical and discretization error. We leave closing this gap for future work.

\section{Experiments}
In this section, we present experiments demonstrating that our estimator achieves vanishing errors. We pick $d=2$, and the input functions $v$ are sampled i.i.d. from $\mathcal{N}(0, 10^{2}(-\nabla^2+\identity)^{-\gamma})$, a widely used distribution for generating training data in the operator learning literature (see \cite{li2020fourier, kovachki2023neural}). Since $\gamma$ governs the decay rate of the eigenvalues of the covariance operator for this distribution, it directly controls the average smoothness of the samples $v$. For our experiments, we set $\gamma = 2$ as this is the smallest integer value that ensures $\gamma >d/2$ for $d=2$.

To generate training data, we define a random operator  
\[
T^{\star} := \sum_{m \in \integers^d} \lambda_m \, \varphi_m \otimes \varphi_{-m},
\]
where $\varphi_m$'s are Fourier modes and $\lambda_m \sim \text{Unif}(-2,2)$. For a given input $v$, the corresponding output is generated as  
$
w = T^{\star}v + \varepsilon,
$
where $\varepsilon \sim \mathcal{N}(0, (-\nabla^2+\identity)^{-3})$. Noise is sampled from a higher-order smooth space to ensure that its addition does not alter the smoothness of $w$. In actual implementation, the transformation $T^{\star}v$ is implemented on some  $N \times N$ grid using Fast Fourier Transform (FFT) and Inverse Fast Fourier Transform (IFFT). The sum over $\integers^d$ is truncated at a Nyquist limit of $N/2$.

Recall that, our estimator in Section \ref{sec:estimator} is obtained by solving a convex optimization problem for $\lambda_m$'s for $m \in \integers^d_{\leq K}$. So, we implement the optimization routine for our estimator using stochastic gradient descent with a projection step to ensure $|\widehat{\lambda}_m| \leq 2$. 


Figures \ref{fig:stat}, \ref{fig:trunc}, and  \ref{fig:disc} show the statistical, truncation, and discretization errors, respectively. The $y$-axis in all these figures represents the relative mean-squared testing error:  
\[
\frac{1}{n_{\text{test}}}\sum_{i=1}^{n_{\text{test}}} \frac{\|w_i^{\text{true}} - w_i^{\text{predicted}}\|^2_{L^2}}{\|w_i^{\text{true}}\|_{L^2}},
\]
evaluated using $n_{\text{test}} = 100$ i.i.d. samples.

\subsection{Statistical Error}
Both training and testing are carried out on a $64 \times 64$ grid, with the estimator implemented using $K=32$ modes. Error bands are included to account for fluctuations in the estimated parameters at small sample sizes, showing results from 5 independent runs. The model is trained and tested at the same resolution at the Nyquist limit of $K=32$ modes to ensure that the reported error isolates statistical error with the minimum possible truncation and discretization errors. The smallest error is $\sim 6 \times 10^{-4}$ for the sample size of $500$.
\begin{figure}
    \centering
    \includegraphics[width=0.7\linewidth]{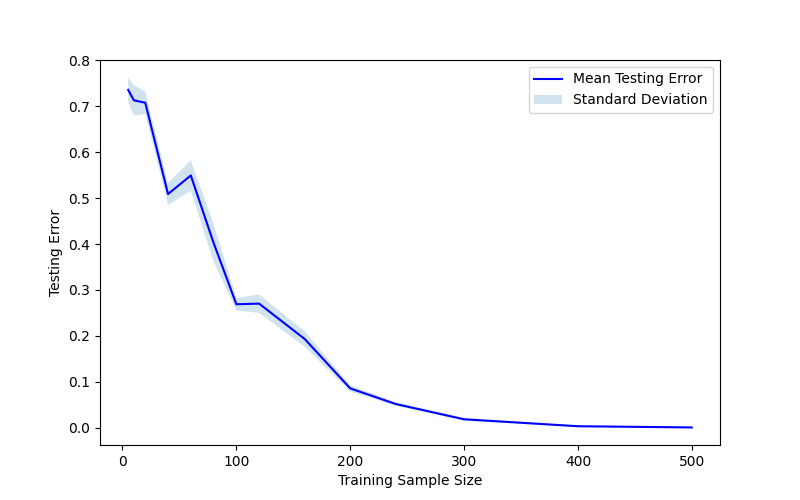}
    \caption{Statistical error of the estimator. }
    \label{fig:stat}
\end{figure}
\subsection{Truncation Error}
Training and testing data are generated on a $128 \times 128$ grid, with the estimator trained using $n=500$ samples. Error bands are omitted as the estimates are almost identical due to a large sample size. Both training and testing are conducted at the same resolution to remove discretization error, with the sample size selected to minimize statistical error, ensuring that the reported error isolates the truncation error effectively. The testing error converges to around $7.9 \times 10^{-4}$ at the Nyquist limit of $K=64$.
\begin{figure}
    \centering
    \includegraphics[width=0.7\linewidth]{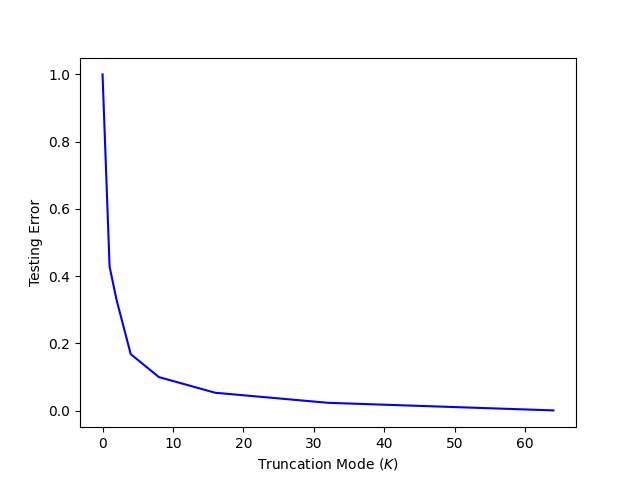}
    \caption{Truncation error of the estimator. }
    \label{fig:trunc}
\end{figure}

\subsection{Discretization Error}
Testing data is generated on a $512 \times 512$ grid. The estimator is trained using $n=500$ samples on grids of varying sizes $N \times N$, where $N \in \{1, 2, 4, 8, 16, 32, 64, 128, 256, 512\}$. For each training grid of size $N \times N$, truncation is performed at the Nyquist limit ($K = N/2$).
The trained estimators are subsequently evaluated at the higher testing resolution of $512 \times 512$ to quantify discretization error. The testing error converges to around $6 \times 10^{-4}$ when the estimator is trained at a full grid size of $512 \times 512$ with $500$ training samples.
    
\begin{figure}
    \centering
    \includegraphics[width=0.7\linewidth]{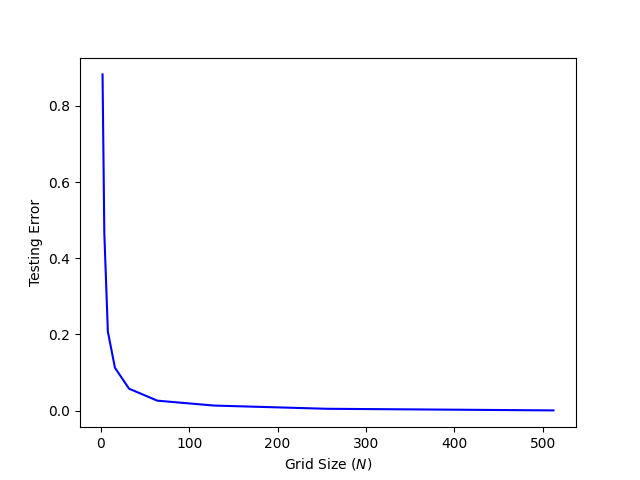}
    \caption{Discretization error of the estimator. }
    \label{fig:disc}
\end{figure}

\section{Discussion and Future Work}
In this work, we established the excess risk error bounds of learning the core linear layer $ v \mapsto \mathcal{F}^{-1}\big(\, \Lambda_{\beta}\,\,  
 \mathcal{F}(v)\big)$ of Fourier neural operators. A natural future direction is to extend these results to single layer Fourier neural operator, $v \mapsto \sigma\left( \mathcal{F}^{-1}\big(\, \Lambda_{\beta}\,\,  
 \mathcal{F}(v)\big) + b \right)$ and then to multiple layers. Although simple metric entropy-based analysis gives a bound on statistical error even for single layer neural operator, such a bound is vacuous when $K \to \infty$. It would be interesting to see if we can get a meaningful statistical rate even at the limit of $K \to \infty$. One can view $K$ as an analog of the width of traditional neural networks.  Thus, analysis of $v \mapsto \sigma\left( \mathcal{F}^{-1}\big(\, \Lambda_{\beta}\,\,  
 \mathcal{F}(v)\big) + b \right)$ as $K \to \infty$ can lead to a neural tangent kernel theory \citep{jacot2018neural} for operator learning. These insights will help us better understand width vs depth tradeoffs in operator learning.

For discretization error, we consider the setup where the training data is available on a grid of size $N^d$ but the trained operator is evaluated at full resolution ($N \to \infty$). It would be interesting to study the discretization error when the training data is available at resolution $N_1$, but the trained operator is evaluated at resolution $N_2$.  Such a theory would provide a more fine-grained quantification of multi-resolution generalization error observed in practices \citep{li2020fourier}.

Finally, with PDEs as an application, it is unclear if the iid-based statistical model is the right framework for operator learning. For instance, \citet{boulle2023elliptic} show that an active learning approach for data collection and training for solution operators of elliptic PDEs yields exponential error decay with increasing sample size. Therefore, an important future direction is to define the appropriate active learning model and develop active algorithms for operator learning.



\bibliography{references}
\bibliographystyle{plainnat}

\newpage
\appendix

\section{Fourier Linear Operators}\label{sec:FLO}
In this section, we provide a formal treatment of Fourier linear operators and the corresponding parametrization in FNOs. Recall that, in the Fourier Neural operator, one assumes that $ \Xcal = \Ycal =\torus^d$ and the kernel is translation invariant. This implies that $\mathcal{K}_{\theta} $ defined in Section \ref{sec:NO} is a convolution operator. That is,
\[\mathcal{K}_{\theta}\,v =k_{\theta} \star v, \quad \text{ where } \quad  (k_{\theta} \star v)(y)=  \int_{\torus^d} k_{\theta}(y-x)\, v(x)\, dx. \]

The convolution is done elementwise, $(\mathcal{K}_{\theta}v)_i(y) = \sum_{j=1}^p \big([k_{\theta}]_{ij} \star v_j\big)(y),$
where $[k_{\theta}]_{ij}: \torus^d \to \reals$ is the scalar-valued kernel defined by the $(i,j)^{th}$ component of $k_{\theta}$ and $(\mathcal{K}_{\theta}v)_{i}$ is the $i^{th}$ component of a $\reals^q$-valued function. Similarly, $v_j: \torus^d \to \reals$ is the $j^{th}$ component function of $\reals^p$-valued function $v$.
Next, using the linearity of the Fourier transform and the Convolution Theorem, we can write
\[\left(\mathcal{K}_{\theta}v\right)_i =  \mathcal{F}^{-1}\left( \sum_{j=1}^p \mathcal{F}\big([k_{\theta}]_{ij}\big)\, \mathcal{F}(v_j)\, \right).\]
where $\mathcal{F}$ is Fourier transform operator, and $\mathcal{F}^{-1}$ is the inverse Fourier transform. Here, $\mathcal{F}([k_{\theta}]_{ij}): \integers^{d} \to \complex $ and $\mathcal{F}(v_j): \integers^{d} \to \complex$ are Fourier transforms of $[k_{\theta}]_{ij}$ and $v_j$ respectively. Note that only discrete Fourier modes are defined because all the functions are defined on a periodic domain $\torus^d$.

The key insight in FNO is that instead of parametrizing the kernel $k_{\theta}$, we parametrize its Fourier transform $\mathcal{F}(k_{\theta})$ directly. That is, we  parametrize the kernel transform operator  as $\left(\mathcal{K}_{\beta}v\right)_i = \mathcal{F}^{-1}\left( \sum_{j=1}^p \, [\Lambda_{\beta}]_{ij}\,\, \mathcal{F}(v_j)\, \right)$
for some $\Lambda_{\beta}: \integers^d \to \complex^{q \times p}$ that maps Fourier modes to a complex-valued matrix. Using the linearity of the inverse Fourier transform, we can write this more succinctly in a matrix form  as $\mathcal{K}_{\beta}\, v = \mathcal{F}^{-1}\big(\, \Lambda_{\beta}\,\,  
 \mathcal{F}(v)\big).$

 Since $\mathcal{F}^{-1}\big(\, \Lambda_{\beta}\,\,  \mathcal{F}(v)\big)$ is a function defined on periodic domain $\torus^d$, it has a Fourier series representation. So, we can write
\[ \mathcal{F}^{-1}\big(\, \Lambda_{\beta}\,\,  
 \mathcal{F}(v)\big)(\cdot) =  \sum_{m \in \integers^d}\, \varphi_m(\cdot) \, \Lambda_{\beta}(m)\, (\mathcal{F}v)(m), \]
as $\varphi_m(\cdot) := e^{2\pi \imaginary \inner{m}{\cdot}}$  and the $m^{th}$ Fourier coefficient of $\mathcal{F}^{-1}\big(\, \Lambda_{\beta}\,\,  
 \mathcal{F}(v)\big)$ is $\Lambda_{\beta}(m)\,  
 \left(\mathcal{F}v\right)(m)$.

We have not specified in what metric the sum on the right-hand side converges. However, the convergence is not really an issue from a practical standpoint. In practice, $\Lambda_{\beta}$ is a trainable parameter, and it has been observed in \cite{li2020fourier} that parametrizing $\Lambda_{\beta}$ as a function from $\integers^d$ to $\complex^{q \times p}$ yields sub-optimal results, possibly due to discrete structure of the lattice $\integers^d$. So, one picks a large $K > 0$ and parametrize $\Lambda_{\beta}$ as a collection of matrices $\{ \Lambda_{\beta}(m) \, :\, m \in \integers^d \text{ such that }|m|_{\infty} \leq K\}$. In this case, the sum contains $\leq K^d$ terms and thus always converges. If one still wants to deal with the infinite sum, a standard assumption would be $[\Lambda_{\beta}]_{ij}\in \ell^1(\integers^d)$ for all $(i,j)$ pairs. That is, $\sum_{m \in \integers^d} |[\Lambda_{\beta}(m)]_{ij}| < \infty$ for all $(i,j)$ pairs. Then, the Weirstrass $M$-test implies that the sum above converges uniformly over all $y \in \torus^d$. 

Reparametrizing $\mathcal{K}_{\theta}$ as $\mathcal{F}^{-1}\big(\Lambda_{\beta}\,\, \mathcal{F}(v)\big)$ was proposed by \citet{li2020fourier} from the perspective of the convolution theorem, as discussed earlier. However, a more natural way to derive $\mathcal{F}^{-1}\big(\Lambda_{\beta}\,\, \mathcal{F}(v)\big)$ from $\mathcal{K}_{\theta}$ is to assume that $k_{\theta}$ has  a Mercer-type decomposition.

\begin{proposition}\label{thm:decomp}
    Let $k_{\theta}:\integers^d \to \complex^{q \times p}$ be a kernel with decomposition
\[[k_{\theta}(y,x)]_{ij} = \sum_{m \in \integers^d} \,\, [\Lambda_{\beta}(m)]_{ij}\,\, \varphi_{m}(y)\, \,\varphi_{-m}(x) \quad   \forall (i,j) \]
for some $\Lambda_{\beta}: \integers^d \to \complex^{q \times p}$ such that $\Lambda_{\beta} \in \ell^1(\integers^d)$. Then,  $\mathcal{K}_{\theta}v = \mathcal{F}^{-1}\big(\, \Lambda_{\beta}\,\,  
 \mathcal{F}(v)\big)$ for all $v \in \Vcal$. 
\end{proposition}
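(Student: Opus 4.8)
The plan is to verify the identity by substituting the assumed Mercer-type decomposition of $k_{\theta}$ into the definition of the integral kernel transform and then interchanging summation with integration, after which the integrals are recognized as Fourier coefficients.

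First, fix $v \in \Vcal$ and work componentwise. By the elementwise definition of $\mathcal{K}_{\theta}$,
\[
(\mathcal{K}_{\theta} v)_i(y) = \sum_{j=1}^p \int_{\torus^d} [k_{\theta}(y,x)]_{ij}\, v_j(x)\, dx,
\]
and substituting $[k_{\theta}(y,x)]_{ij} = \sum_{m \in \integers^d} [\Lambda_{\beta}(m)]_{ij}\, \varphi_m(y)\, \varphi_{-m}(x)$ produces a double sum over $j \le p$ and $m \in \integers^d$ inside the integral. The key step is to justify pulling the infinite sum over $m$ past the integral. Since $\torus^d \simeq [0,1]^d$ has finite measure, Cauchy--Schwarz gives $\norm{v_j}_{L^1} \le \norm{v_j}_{L^2} < \infty$; combined with $|\varphi_m(y)| = |\varphi_{-m}(x)| = 1$ and the hypothesis $\Lambda_{\beta} \in \ell^1(\integers^d)$, we obtain
\[
\sum_{m \in \integers^d} \sum_{j=1}^p \int_{\torus^d} \big| [\Lambda_{\beta}(m)]_{ij}\, \varphi_m(y)\, \varphi_{-m}(x)\, v_j(x) \big|\, dx \;\le\; \Big( \sum_{m \in \integers^d} \sum_{j=1}^p |[\Lambda_{\beta}(m)]_{ij}| \Big)\, \max_{j \le p} \norm{v_j}_{L^1} \;<\; \infty,
\]
so Fubini--Tonelli applies for every fixed $y$ and the interchange is legitimate.

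Next, identify $\int_{\torus^d} \varphi_{-m}(x)\, v_j(x)\, dx = \int_{\torus^d} v_j(x)\, e^{-2\pi \imaginary \inner{m}{x}}\, dx = (\mathcal{F}v_j)(m)$. Re-summing the finite sum over $j$ collapses $\sum_{j=1}^p [\Lambda_{\beta}(m)]_{ij}\, (\mathcal{F}v_j)(m)$ into the $i$-th entry of the matrix--vector product $\Lambda_{\beta}(m)\,(\mathcal{F}v)(m)$, whence
\[
(\mathcal{K}_{\theta} v)_i(y) = \sum_{m \in \integers^d} \varphi_m(y)\, \big[\Lambda_{\beta}(m)\,(\mathcal{F}v)(m)\big]_i .
\]
This is exactly the $i$-th component of the series representation of $\mathcal{F}^{-1}\big(\Lambda_{\beta}\, \mathcal{F}(v)\big)(y)$ recorded earlier in this appendix, so the claimed equality holds. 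Finally, because $|(\mathcal{F}v_j)(m)| \le \norm{v_j}_{L^1}$ is bounded uniformly in $m$ and $\Lambda_{\beta} \in \ell^1$, the Weierstrass $M$-test shows this Fourier series converges uniformly over $y \in \torus^d$, so the identity holds pointwise everywhere (and hence also in $L^2$).

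I expect the only real subtlety to be the bookkeeping in the interchange step: one must note that the dominating bound is uniform in $y$ (which it is, since $|\varphi_m(y)| = 1$) so that the Fubini argument is valid for each fixed $y$ simultaneously, and keep track of the matrix index $i$ throughout. Everything else — the evaluation of the integral as a Fourier coefficient and the re-summation into $\Lambda_{\beta}(m)(\mathcal{F}v)(m)$ — is routine linear algebra and the definition of $\mathcal{F}$.
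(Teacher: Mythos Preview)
Your proof is correct and follows essentially the same approach as the paper: substitute the Mercer-type decomposition into the integral operator, interchange sum and integral via Fubini (justified by $\Lambda_{\beta}\in\ell^1$ and boundedness of $\torus^d$), recognize the resulting integrals as Fourier coefficients, and identify the series with the Fourier representation of $\mathcal{F}^{-1}(\Lambda_{\beta}\,\mathcal{F}(v))$. Your Fubini justification is in fact slightly more explicit than the paper's (you spell out $\norm{v_j}_{L^1}\le\norm{v_j}_{L^2}<\infty$), while the paper is a bit more explicit in the final identification step by defining $w:=\mathcal{F}^{-1}(\sum_j\lambda_{ij}\mathcal{F}(v_j))$ and checking its Fourier series matches; these are cosmetic differences only.
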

Given such decomposition, a simple algebra shows that $ \int_{\torus^d}[k_{\theta}(y,x)]_{ij} \, \varphi_{k}(x) dx =  [\Lambda_{\beta}(k)]_{ij}\,\, \varphi_k(y)$. In other words, $[\Lambda_{\beta}(k)]_{ij}$ are the eigenvalues of the integral operator defined by the kernel $[k_{\theta}]_{ij}$. This suggests that the Fourier layer of FNOs is parametrizing the eigenvalues of an operator while fixing the eigenfunctions to be $\varphi_{k}$'s. So,  setting $\Lambda_{\beta}(m)=0$ for $m  \in \integers^d_{>K}$ amounts to parametrizing the low-rank version of such operator. This viewpoint shows that FNO is just a special case of a Low-rank Neural Operator defined in \citep[Section 4.2]{kovachki2023neural}.


More importantly, Proposition \ref{thm:decomp} (see Appendix \ref{appdx:proofprop1} for the proof) provides a natural way to generalize Fourier Neural Operators. That is, we can consider $[k_{\theta}(y,x)]_{ij} = \sum_{m \in \mathcal{J}} \,\, [\Lambda_{\beta}(m)]_{ij}\,\, \psi_m(y)\, \phi_m(x)$,
where $\mathcal{J}$ is some countable index-set and $\{\psi_m\}_{m \in \mathcal{J}}$,  $ \{\phi_m\}_{m \in \mathcal{J}}$ are some orthonormal sequences. Some common orthonormal sequences that allow efficient computation like FFT include the Chebyshev polynomial and wavelet basis. Some works have already explored the practical advantage of replacing Fourier basis with wavelet basis in certain problem settings \cite{gupta2021multiwavelet, tripura2023wavelet}.

\subsection{Proof of Proposition \ref{thm:decomp}}\label{appdx:proofprop1}

We now end this section by proving Proposition \ref{thm:decomp}.
\begin{proof}
Let $\lambda_{ij}(m) := [\Lambda_{\beta}(m)]_{ij}$ and assume that  
\[[k_{\theta}(y,x)]_{ij} = \sum_{m \in \integers^d} \lambda_{ij}(m)\, \, \varphi_{m}(y) \,\,  \varphi_{-m}(x).\]
Using this decomposition, we obtain
\begin{equation*}
    \begin{split}
        (\mathcal{K}_{\theta}v)_i(y) &= \int_{\torus^d}\,\sum_{j=1}^p\,  [k_{\theta}(y,x)]_{ij}\, \, v_j(x)\,  dx\\
        &= \int_{\torus^d}\, \sum_{j=1}^p \,\sum_{m \in \integers^d} \lambda_{ij}(m)\, \, \varphi_{m}(y) \,\,  \varphi_{-m}(x)\, v_{j}(x)\, dx \\
        &=\sum_{m \in \integers^d}\,  \varphi_m(y)\ \sum_{j=1}^{p}  \lambda_{ij}(m)\, \int_{\torus^d}\, \varphi_{-m}(x)\, v_j(x)\, dx.
    \end{split}
\end{equation*}
Note that swapping the integral and the summation is justified through Fubini's because the sum over $\integers^d$ converges absolutely (as $\Lambda_{\beta} \in \ell^1$) and $\torus^d$ is a bounded set. Since
\[\int_{\torus^d}\, \varphi_{-m}(x)\, v_j(x)\, dx = \int_{\torus^d}\, e^{-2 \pi \operatorname{i} \inner{m}{x}} v_j(x)\, dx  = \mathcal{F}(v_j)(m),\]
we can write
\[(\mathcal{K}_{\theta}v)_i(y)= \sum_{m \in \integers^d} \varphi_m(y) \sum_{j=1}^p \lambda_{ij}(m) \, \, \mathcal{F}(v_j)(m). \]

Next, consider the function $w := \mathcal{F}^{-1}\left( \sum_{j=1}^p \lambda_{ij}\, \mathcal{F}(v_j) \right)$. Our proof will be complete upon showing that $w(y) = (\mathcal{K}_{\theta}v)_i(y)$ for every $y \in \torus^d$.  Since the function $w:\torus^d \to \complex$ is defined on a periodic domain, it has a  Fourier series representation. That is,
\[w(y) = \sum_{m \in \integers^d} e^{2\pi \operatorname{i} \inner{m}{y}}\, \mathcal{F}(w)(m) =  \sum_{m \in \integers^d} e^{2\pi \operatorname{i} \inner{m}{y}} \sum_{j=1}^p \lambda_{ij}(m) \, \mathcal{F}(v_j)(m), \]
where the final equality follows because $\mathcal{F}\left(\mathcal{F}^{-1}\left( \sum_{j=1}^p \lambda_{ij}\, \mathcal{F}(v_j) \right) \right)(m) = \sum_{j=1}^p \lambda_{ij}(m)\, \mathcal{F}(v_j)(m)$. As usual, $\Lambda_{\beta} \in \ell^1$ implies that the sum above converges uniformly over $y \in \torus^d$. 
Recalling that $\varphi_m(y) = e^{2\pi \operatorname{i} \inner{m}{y}}$, we have shown that $(\mathcal{K}_{\theta}v)_i(y) = w(y)$ for all $y \in \torus^d$. This subsequently implies that
\[(\mathcal{K}_{\theta}v)_i = w = \mathcal{F}^{-1}\left( \sum_{j=1}^p \lambda_{ij}\, \mathcal{F}(v_j) \right).\]
Finally, using the linearity of the inverse Fourier transform and writing this in the matrix form establishes that $\mathcal{K}_{\theta}v = \mathcal{F}^{-1}(\Lambda_{\beta}\, \mathcal{F}(v))$ for any $v \in \Vcal$. 
\end{proof}


\section{Proof of Proposition \ref{thm:SVD}}\label{proof:prop2}
\begin{proof}
    Fix $v \in \Vcal$ and define $w :=  \mathcal{F}^{-1}\big(\, \lambda\,\,  
 \mathcal{F}(v)\big)$. By definition  of the operator $\mathcal{F}^{-1}\big(\, \lambda\,\,  
 \mathcal{F}(\cdot)\big)$, we have
 \[w = \mathcal{F}^{-1}\left(  \lambda\, \mathcal{F}(v)\, \right). \]
 Using the Fourier series representation of $w$, we have
 \[w(\cdot) = \sum_{m \in \integers^d} e^{2\pi \operatorname{i} \inner{m}{\cdot}}\, (\mathcal{F}w)(m) = \sum_{m \in \integers^d} e^{2\pi \operatorname{i} \inner{m}{\cdot}}\, \lambda_m\, \,\, \mathcal{F}(v)(m).\]
 This step is rigorously justified because $\lambda \in \ell^1$. Noting that
\[(\mathcal{F}v)(m) = \int_{\torus^d}\, e^{-2\pi \imaginary \inner{m}{x}}\, v(x)\, dx= \inner{\varphi_{-m}}{v}_{L^2},\]
we can write
\[w(\cdot) =  \sum_{m \in \integers^d} e^{2\pi \operatorname{i} \inner{m}{\cdot}} \, \lambda_m \, \inner{\varphi_{-m}}{v}_{L^2}. \]
Thus, $w = \sum_{m \in \integers^d} \lambda_m\, \inner{\varphi_{-m}}{v}_{L^2}\, \varphi_m$, where the convergence is uniform over $\torus^d$. This implies that 
\[\mathcal{F}^{-1}\left( \lambda\, \mathcal{F}(v)\, \right) = \sum_{m \in \integers^d} \lambda_m\, \inner{\varphi_{-m}}{v}_{L^2}\, \varphi_m. \]
Since this equality holds for every $v \in \Vcal$, we have 
\[\mathcal{F}^{-1}\left(  \lambda\, \mathcal{F}(\cdot)\, \right) = \sum_{m \in \integers^d} \lambda_m \, \, \varphi_m \otimes \varphi_{-m}. \]

 \end{proof}

\section{Technical Lemmas}\label{sec:lemma}
In this section, we state and derive some technical Lemmas that we use to prove Theorems \ref{thm:rates} and \ref{thm:lowerbounds}.
\begin{lemma}\label{lem:Fouriercoeff}
 For any $ u \in \Hcal^s(\torus^d, \reals)$ , we have
 \[|\inner{\varphi_{-m}}{u}_{L^2}| \leq \frac{\norm{u}_{\Hcal^s}}{(2\pi)^s\, |m|_{\infty}^s} \quad \quad \forall m \in \integers^d \backslash \{\bf{0}\}.  \]
\end{lemma}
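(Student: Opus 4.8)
The plan is to bound the Fourier coefficient $\inner{\varphi_{-m}}{u}_{L^2}$ by relating it to the Fourier coefficient of a suitable weak derivative of $u$ and then applying the Cauchy--Schwarz (or just Bessel/Parseval) bound. The key observation is the standard identity that, for $u \in \Hcal^s(\torus^d,\reals)$ and a multi-index $k \in \naturals_0^d$ with $|k|_\infty \le s$,
\[
\inner{\varphi_{-m}}{\partial^k u}_{L^2} = (2\pi \imaginary)^{|k|_1}\, m^k \, \inner{\varphi_{-m}}{u}_{L^2},
\]
where $m^k := \prod_{j=1}^d m_j^{k_j}$. This follows by integration by parts on the torus (no boundary terms because of periodicity), or equivalently from the definition of the weak derivative tested against the smooth function $\varphi_{-m}$. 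Taking $k = s\, e_{j^\star}$, where $j^\star$ is an index achieving $|m_{j^\star}| = |m|_\infty$ and $e_{j^\star}$ is the corresponding coordinate vector (this is a valid multi-index since $|k|_\infty = s$), gives
\[
|2\pi m_{j^\star}|^{s}\, \bigl|\inner{\varphi_{-m}}{u}_{L^2}\bigr| = \bigl|\inner{\varphi_{-m}}{\partial^{s e_{j^\star}} u}_{L^2}\bigr|.
\]

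Next I would bound the right-hand side. Since $\{\varphi_{-m}\}$ is an orthonormal system and $\partial^{s e_{j^\star}} u \in L^2$, Bessel's inequality gives $|\inner{\varphi_{-m}}{\partial^{s e_{j^\star}} u}_{L^2}| \le \norm{\partial^{s e_{j^\star}} u}_{L^2}$. Finally, $\norm{\partial^{s e_{j^\star}} u}_{L^2}^2$ is just one of the summands in the definition of $\norm{u}_{\Hcal^s}^2 = \sum_{|k|_\infty \le s}\norm{\partial^k u}_{L^2}^2$, so $\norm{\partial^{s e_{j^\star}} u}_{L^2} \le \norm{u}_{\Hcal^s}$. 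Combining, and using $|m_{j^\star}| = |m|_\infty \ge 1$ (valid since $m \ne \mathbf{0}$), yields
\[
\bigl|\inner{\varphi_{-m}}{u}_{L^2}\bigr| \le \frac{\norm{\partial^{s e_{j^\star}} u}_{L^2}}{(2\pi)^s |m|_\infty^s} \le \frac{\norm{u}_{\Hcal^s}}{(2\pi)^s |m|_\infty^s},
\]
which is the claim.

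The only real obstacle is justifying the integration-by-parts identity rigorously at the level of weak derivatives: one must check that $\varphi_{-m}$ is an admissible test function and that the weak derivative $\partial^{s e_{j^\star}}u$ (which exists by the definition of $\Hcal^s$ with the $|k|_\infty \le s$ convention) indeed has the claimed Fourier coefficients. This is routine — it follows by iterating the one-variable periodic integration-by-parts formula $\int_{\torus^d} (\partial_j u)\, \overline{\varphi_m}\, dx = 2\pi\imaginary m_j \int_{\torus^d} u\, \overline{\varphi_m}\, dx$ exactly $s$ times in the $j^\star$ coordinate, each step legitimate because $\varphi_{-m}$ and its derivatives are smooth and periodic. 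Alternatively, one can bypass it entirely by working with the Fourier-side characterization of $\Hcal^s$, but since the paper defines $\Hcal^s$ via weak derivatives, the integration-by-parts route is the natural one. Everything else is bookkeeping with multi-indices and the choice of the dominant coordinate $j^\star$.
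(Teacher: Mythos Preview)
Your proposal is correct and follows essentially the same route as the paper: choose the coordinate $j^\star$ where $|m|_\infty$ is attained, integrate by parts $s$ times in that coordinate to obtain $(2\pi)^s |m|_\infty^s\,|\inner{\varphi_{-m}}{u}| = |\inner{\varphi_{-m}}{\partial_{j^\star}^s u}|$, and then bound the right-hand side by $\norm{u}_{\Hcal^s}$. The paper's write-up is terser (it simply asserts the integration-by-parts identity and cites periodicity for the vanishing boundary terms), but the argument is identical.
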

\begin{proof}
    Fix $m \in \integers^d \backslash \{\bf{0}\}$ and let $|m_{j}| = |m|_{\infty} = \max_{1 \leq i \leq d} |m_{i}|$. Clearly, $m_j \neq 0$. Integrating by parts $s$ times with respect to variable $x_j$ in $x = (x_1, \ldots, x_d)$, we obtain
\[\inner{\varphi_{-m}}{ u} = \int_{\torus^d}\,u(x) e^{- 2 \pi \imaginary \inner{m}{x}}\, dx =  (-1)^s\, \int_{\torus^d}\, (\partial^{s}_{j} u)(x) \frac{e^{- 2 \pi \imaginary \inner{m}{x}}}{(- 2\pi \imaginary \, m_{j})^{s}}\, dx = \left(\frac{1}{ 2 \pi \imaginary  m_{j}}\right)^s \inner{\varphi_{-m}}{\partial_{j}^s\, u}. \]
Here, all boundary terms vanish because $\torus^d$ does not have a boundary (\citep[Proof of Theorem 3.3.9]{grafakos2008classical}). Taking absolute value on both sides, we obtain that
\[|m_j|^s\,  |\inner{\varphi_{-m}}{u}| = (2 \pi)^{-s} \, |\inner{\varphi_{-m}}{\partial^s_j u}| \]
Finally, using the fact that $\left|\inner{\varphi_{-m}}{\partial^s_j u} \right| \leq \norm{u}_{\Hcal^s}$ completes our proof. 
\end{proof}

\begin{lemma}\label{lem:Fouriersum}
 For any $ u \in \Hcal^s(\torus^d, \reals)$, we have
 \[\sum_{m \in \integers^d }(1+ |m|_{\infty}^{2s}) \,\,|\inner{\varphi_{-m}}{u}|^2 \leq \norm{u}_{\Hcal^s}^2.  \]
\end{lemma}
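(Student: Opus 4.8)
The plan is to prove Lemma~\ref{lem:Fouriersum} by splitting the sum into the contribution from the zero mode and the nonzero modes, and then relating the quantity $(1+|m|_\infty^{2s})|\inner{\varphi_{-m}}{u}|^2$ to the Sobolev norm term by term. Concretely, by the definition of the $\Hcal^s$ inner product and Parseval applied to each weak partial derivative, we have
\[
\norm{u}_{\Hcal^s}^2 = \sum_{k \in \naturals_0^d \,:\, |k|_\infty \leq s} \norm{\partial^k u}_{L^2}^2 = \sum_{k \in \naturals_0^d \,:\, |k|_\infty \leq s} \sum_{m \in \integers^d} |\inner{\varphi_{-m}}{\partial^k u}|^2.
\]
The key computation, already implicit in the proof of Lemma~\ref{lem:Fouriercoeff}, is that $\inner{\varphi_{-m}}{\partial^k u} = (2\pi \imaginary m)^k \inner{\varphi_{-m}}{u}$ (interpreting $(2\pi\imaginary m)^k = \prod_j (2\pi\imaginary m_j)^{k_j}$), so that $|\inner{\varphi_{-m}}{\partial^k u}|^2 = (2\pi)^{2|k|_1}\, m^{2k}\, |\inner{\varphi_{-m}}{u}|^2$ where $m^{2k} = \prod_j m_j^{2k_j}$. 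Interchanging the two sums (everything is nonnegative, so Tonelli applies with no integrability worry), we obtain
\[
\norm{u}_{\Hcal^s}^2 = \sum_{m \in \integers^d} |\inner{\varphi_{-m}}{u}|^2 \left( \sum_{k \in \naturals_0^d \,:\, |k|_\infty \leq s} (2\pi)^{2|k|_1}\, m^{2k} \right).
\]

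It then suffices to show the pointwise bound $1 + |m|_\infty^{2s} \leq \sum_{k \,:\, |k|_\infty \leq s} (2\pi)^{2|k|_1} m^{2k}$ for every $m \in \integers^d$. For $m = \mathbf{0}$ the right side includes the $k = \mathbf{0}$ term which equals $1$, matching the left side (which is also $1$ since $|\mathbf{0}|_\infty^{2s} = 0$). For $m \neq \mathbf{0}$, pick the coordinate $j$ with $|m_j| = |m|_\infty \geq 1$ and consider the single multi-index $k = s\, e_j$ (all zeros except $s$ in position $j$), which satisfies $|k|_\infty = s$: it contributes $(2\pi)^{2s} m_j^{2s} = (2\pi)^{2s} |m|_\infty^{2s} \geq |m|_\infty^{2s}$ since $2\pi > 1$. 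Combined with the $k=\mathbf{0}$ term contributing $1$, and since all other terms are nonnegative, we get $\sum_k (2\pi)^{2|k|_1} m^{2k} \geq 1 + |m|_\infty^{2s}$, which is exactly what we need. Substituting this bound into the displayed identity for $\norm{u}_{\Hcal^s}^2$ yields
\[
\norm{u}_{\Hcal^s}^2 \geq \sum_{m \in \integers^d} (1 + |m|_\infty^{2s})\, |\inner{\varphi_{-m}}{u}|^2,
\]
which is the claim.

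The only genuinely delicate point is justifying the integration-by-parts identity $\inner{\varphi_{-m}}{\partial^k u} = (2\pi\imaginary m)^k \inner{\varphi_{-m}}{u}$ for weak derivatives on the torus; but this is precisely the content of the computation in the proof of Lemma~\ref{lem:Fouriercoeff} (iterated coordinatewise), so I would simply cite that argument rather than redo it. Everything else --- the interchange of summation (Tonelli, nonnegative terms) and the elementary inequality $1 + |m|_\infty^{2s} \leq \sum_k (2\pi)^{2|k|_1} m^{2k}$ --- is routine. I do not anticipate any real obstacle; the lemma is essentially a bookkeeping refinement of Lemma~\ref{lem:Fouriercoeff} that keeps the full sum over $k$ rather than discarding all but one term.
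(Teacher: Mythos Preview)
Your proof is correct and follows essentially the same strategy as the paper: integration by parts to relate $\inner{\varphi_{-m}}{u}$ to the Fourier coefficients of suitable derivatives, then Parseval. The only cosmetic difference is that you first expand $\norm{u}_{\Hcal^s}^2$ as the full sum $\sum_m |\inner{\varphi_{-m}}{u}|^2 \sum_{|k|_\infty\le s}(2\pi)^{2|k|_1}m^{2k}$ and then keep the two terms $k=0$ and $k=se_j$, whereas the paper picks the single derivative $\partial_j^s u$ (with $j=j(m)$ maximizing $|m_j|$) from the start and adds the $L^2$ term separately; your packaging is arguably a bit cleaner since it makes the dependence of $j$ on $m$ transparent.
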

\begin{proof}
Fix $m \in \integers^d \backslash \{\bf{0}\}$ and let $|m_{j}| = |m|_{\infty} = \max_{1 \leq i \leq d} |m_{i}|$. Clearly, $m_j \neq 0$. Integrating by parts $s$ times with respect to variable $x_j$ in $x = (x_1, \ldots, x_d)$, we obtain
\[\inner{\varphi_{-m}}{ u} = \int_{\torus^d}\,u(x) e^{- 2 \pi \imaginary \inner{m}{x}}\, dx =  (-1)^s\, \int_{\torus^d}\, (\partial^{s}_{j} u)(x) \frac{e^{- 2 \pi \imaginary \inner{m}{x}}}{(- 2\pi \imaginary \, m_{j})^{s}}\, dx = \left(\frac{1}{ 2 \pi \imaginary  m_{j}}\right)^s \inner{\varphi_{-m}}{\partial_{j}^s\, u}. \]
Here, all boundary terms vanish because $\torus^d$ does not have a boundary (\citep[Proof of Theorem 3.3.9]{grafakos2008classical}). Taking absolute value on both sides, we obtain that
\[|m_j|^s\,  |\inner{\varphi_{-m}}{u}| = (2 \pi)^{-s} \, |\inner{\varphi_{-m}}{\partial^s_j u}| \]
Noting that $|m_j| = |m|_{\infty}$, squaring and summing over all $m \in \integers^d \backslash \{ \bf{0}\}$ to get
\[\sum_{m \in \integers^d \backslash \{\bf{0}\}} |m|_{\infty}^{2s} \, |\inner{\varphi_{-m}}{u}|^2 = (2\pi)^{-2s} \sum_{m \in \integers^d \backslash \{\bf{0}\}}\,|\inner{\varphi_{-m}}{\partial^s_j u}|^2 \leq (2\pi)^{-2s}\, \norm{\partial^s_j u}_{L^2}^2,  \]
where the final inequality uses Parseval's identity and the fact that $\partial^s_j u \in L^2(\torus^d, \reals)$. 
Thus, we obtain
\begin{equation*}
    \begin{split}
        \sum_{m \in \integers^d } (1+|m|_{\infty}^{2s})\, \, |\inner{\varphi_{-m}}{u}|^2  &= \sum_{m \in \integers^d} |\inner{\varphi_{-m}}{u}|^2 + \sum_{m \in \integers^d \backslash \{\bf{0}\}} |m|_{\infty}^{2s} \, |\inner{\varphi_{-m}}{u}|^2 \\
        &\leq \norm{u}_{L^2}^2 + (2\pi)^{-2s}\, \norm{\partial^s_j u}_{L^2}^2\\
        &\leq \norm{u}_{L^2}^2 + \norm{\partial^s_j u}_{L^2}^2 \\
        &\leq \norm{u}_{\Hcal^s}^2  ,
    \end{split}
\end{equation*}
completing our proof. 
\end{proof}

\begin{lemma}\label{lem:Fourier_sum_sq}
     For any $ u \in \Hcal^s(\torus^d, \reals)$  such that $s\geq 0$ and $K \in \integers_{>0}$, we have
\[\sum_{ m \in \integers^d_{> K}}\, |\inner{\varphi_{-m}}{u}|^2 \leq \frac{\norm{u}_{\Hcal^s}^2}{K^{2s}}  \]
\end{lemma}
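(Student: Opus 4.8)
The plan is to bound the tail sum $\sum_{m \in \integers^d_{>K}} |\inner{\varphi_{-m}}{u}|^2$ by exploiting the decay of Fourier coefficients guaranteed by Lemma \ref{lem:Fouriersum}. The key observation is that for every $m \in \integers^d_{>K}$ we have $|m|_\infty \geq K+1 > K$, so $|m|_\infty^{2s} \geq K^{2s}$, which means $1 + |m|_\infty^{2s} \geq K^{2s}$, or equivalently $1 \geq K^{2s}/(1+|m|_\infty^{2s})$ is false in the wrong direction — rather, the point is that on this index set the weight $1 + |m|_\infty^{2s}$ is at least $K^{2s}$.

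First I would write
\[
K^{2s} \sum_{m \in \integers^d_{>K}} |\inner{\varphi_{-m}}{u}|^2 \leq \sum_{m \in \integers^d_{>K}} (1 + |m|_\infty^{2s})\, |\inner{\varphi_{-m}}{u}|^2 \leq \sum_{m \in \integers^d} (1 + |m|_\infty^{2s})\, |\inner{\varphi_{-m}}{u}|^2 \leq \norm{u}_{\Hcal^s}^2,
\]
where the first inequality uses $|m|_\infty \geq K+1$ hence $1 + |m|_\infty^{2s} \geq K^{2s}$ on the tail, the second inequality drops the nonnegative terms with $|m|_\infty \leq K$, and the third is exactly Lemma \ref{lem:Fouriersum}. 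Dividing through by $K^{2s}$ yields the claimed bound.

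There is really no substantive obstacle here; the lemma is an immediate corollary of Lemma \ref{lem:Fouriersum}. The only minor point to be careful about is the case $s = 0$ (allowed by the hypothesis $s \geq 0$): then $K^{2s} = 1$ and the statement reduces to $\sum_{m \in \integers^d_{>K}} |\inner{\varphi_{-m}}{u}|^2 \leq \norm{u}_{L^2}^2$, which follows from Parseval's identity since the full sum equals $\norm{u}_{L^2}^2 = \norm{u}_{\Hcal^0}^2$; the chain of inequalities above still works verbatim. I would simply present the three-line display as the entire proof.
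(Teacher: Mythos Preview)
Your proof is correct and is essentially the same as the paper's: both multiply and divide by the weight $1+|m|_\infty^{2s}$, use that this weight is at least $K^{2s}$ on the tail $\integers^d_{>K}$, and then invoke Lemma~\ref{lem:Fouriersum}. The paper writes the same chain as $\sum_{m\in\integers^d_{>K}}|\inner{\varphi_{-m}}{u}|^2 \le \frac{1}{1+K^{2s}}\sum_{m\in\integers^d_{>K}}(1+|m|_\infty^{2s})|\inner{\varphi_{-m}}{u}|^2 \le \norm{u}_{\Hcal^s}^2/K^{2s}$, which is your inequality rearranged.
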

\begin{proof}
Observe that 
       \begin{equation*}
        \begin{split}
          \sum_{ m \in \integers^d_{> K}}\, |\inner{\varphi_{-m}}{u}|^2 &=    \sum_{ m \in \integers^d_{> K}}\, 
 (1+ |m|_{\infty}^{2s})\, |\inner{\varphi_{-m}}{u}|^2\, \frac{1}{(1+ |m|_{\infty}^{2s})} \\
 &\leq   \frac{ 1}{1+K^{2s}}\, \sum_{ m \in \integers^d_{> K}}\, 
 (1+ |m|_{\infty}^{2s})\, |\inner{\varphi_{-m}}{u}|^2\,\\
 &\leq \frac{\norm{u}_{\Hcal^s}^2}{K^{2s}},
        \end{split}
    \end{equation*}
    using Lemma \ref{lem:Fouriersum}. 
\end{proof}

\begin{lemma}\label{lem:fourier_sum_abs}
   For any $ u \in \Hcal^s(\torus^d, \reals)$ such that $s > d/2$, we have

 \[\sum_{ m \in \integers^d_{> K}}\, |\inner{\varphi_{-m}}{u}| \leq \norm{u}_{\Hcal^s} \, \sqrt{\frac{3^d}{ 2s-d}}\, \frac{1}{\sqrt{K^{2s-d}}}, .\]
\end{lemma}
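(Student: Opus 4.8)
The plan is to bound the sum $\sum_{m \in \integers^d_{>K}} |\inner{\varphi_{-m}}{u}|$ by a Cauchy--Schwarz argument against the weights $(1+|m|_\infty^{2s})$ that appear in Lemma \ref{lem:Fouriersum}. Concretely, write
\[
\sum_{m \in \integers^d_{>K}} |\inner{\varphi_{-m}}{u}|
= \sum_{m \in \integers^d_{>K}} \sqrt{1+|m|_\infty^{2s}}\,|\inner{\varphi_{-m}}{u}| \cdot \frac{1}{\sqrt{1+|m|_\infty^{2s}}}
\le \Bigl(\sum_{m \in \integers^d_{>K}} (1+|m|_\infty^{2s})|\inner{\varphi_{-m}}{u}|^2\Bigr)^{1/2} \Bigl(\sum_{m \in \integers^d_{>K}} \frac{1}{1+|m|_\infty^{2s}}\Bigr)^{1/2}.
\]
The first factor is at most $\norm{u}_{\Hcal^s}$ by Lemma \ref{lem:Fouriersum}. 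So the whole problem reduces to estimating the deterministic lattice sum $\sum_{m \in \integers^d_{>K}} (1+|m|_\infty^{2s})^{-1}$, and the claimed bound will follow if this sum is at most $\frac{3^d}{(2s-d)K^{2s-d}}$.

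For the lattice sum I would organize $\integers^d_{>K}$ by the value $j = |m|_\infty$, which ranges over $j \ge K+1$. The number of lattice points with $|m|_\infty = j$ is $(2j+1)^d - (2j-1)^d$, which I would crudely bound by $d(2j+1)^{d-1}\cdot 2 \le 2d(3j)^{d-1} = 2d\,3^{d-1} j^{d-1}$ for $j \ge 1$ (using the mean value theorem on $t \mapsto (2t+1)^d$, or just $(2j+1)^d \le (3j)^d$ and telescoping). Hence
\[
\sum_{m \in \integers^d_{>K}} \frac{1}{1+|m|_\infty^{2s}} \le \sum_{j \ge K+1} \frac{2d\,3^{d-1} j^{d-1}}{j^{2s}} = 2d\,3^{d-1}\sum_{j \ge K+1} j^{d-1-2s},
\]
and then I would compare $\sum_{j \ge K+1} j^{d-1-2s}$ with the integral $\int_K^\infty t^{d-1-2s}\,dt = \frac{K^{d-2s}}{2s-d}$ (valid since $2s-d>0$, so the exponent $d-1-2s < -1$ and $t \mapsto t^{d-1-2s}$ is decreasing). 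Putting the constants together gives a bound of the form $\frac{2d\,3^{d-1}}{2s-d}\cdot\frac{1}{K^{2s-d}}$; since $2d\,3^{d-1} \le 3^d$ for all $d \ge 1$, this is at most $\frac{3^d}{(2s-d)K^{2s-d}}$, and taking the square root yields exactly the stated inequality.

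The main obstacle is purely bookkeeping: getting the counting of $\ell^\infty$-spheres and the resulting constant to come out as cleanly as $3^d$ requires being slightly careful with the crude bounds (one wants $(2j+1)^{d-1}$-type factors absorbed into $3^{d-1}$ and the factor $2d$ absorbed as well), and one must double-check the case $d=1$ separately since there the shell count is just $2$ rather than growing. A cleaner alternative that avoids shell-counting is to dominate $\integers^d_{>K}$ by $\{m : |m|_\infty \ge K+1\}$ and use $1+|m|_\infty^{2s} \ge |m|_\infty^{2s}$ together with the standard fact that for a decreasing positive $f$, $\sum_{m \in \integers^d,\,|m|_\infty > K} f(|m|_\infty) \le \int_{\{x \in \reals^d : |x|_\infty > K\}} f(|x|_\infty - 1)\,dx$ or a direct product-structure estimate; either route lands on the same $K^{d-2s}/(2s-d)$ rate, and I would pick whichever keeps the constant under $3^d$ with the least fuss. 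No genuine analytic difficulty arises beyond Lemma \ref{lem:Fouriersum} and Cauchy--Schwarz.
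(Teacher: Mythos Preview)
Your approach is exactly the paper's: Cauchy--Schwarz against the weight $(1+|m|_\infty^{2s})$, invoke Lemma~\ref{lem:Fouriersum} for the first factor, and reduce to the lattice tail $\sum_{m\in\integers^d_{>K}}(1+|m|_\infty^{2s})^{-1}$, which is handled by grouping over $\ell^\infty$-shells and comparing to an integral.

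There is, however, a genuine slip in your constant bookkeeping. You assert that $2d\cdot 3^{d-1}\le 3^d$ for all $d\ge 1$; this is equivalent to $2d\le 3$, which fails already at $d=2$. So the chain of inequalities you wrote does not land on the constant $3^d$ for $d\ge 2$. The paper avoids this extra factor of $d$ by taking the shell count to be $2(2j+1)^{d-1}$ rather than your (correct) upper bound $2d(2j+1)^{d-1}$; with $2(2j+1)^{d-1}\le 2\cdot 3^{d-1}j^{d-1}$ one gets $2\cdot 3^{d-1}\le 3^d$ immediately and the rest goes through as you wrote. (It is worth noting that $|\{m:|m|_\infty=j\}|=(2j+1)^d-(2j-1)^d$, and the paper's stated equality $=2(2j+1)^{d-1}$ is not literally correct for $d\ge 2$; so if you want an honest proof with the exact constant $3^d$, you will need a slightly sharper shell estimate than the mean-value bound you used, or else accept a constant like $(2d)\,3^{d-1}$ in place of $3^d$.) Everything else in your outline---the Cauchy--Schwarz split, the appeal to Lemma~\ref{lem:Fouriersum}, and the integral comparison $\sum_{j\ge K+1}j^{d-1-2s}\le\int_K^\infty t^{d-1-2s}\,dt=\frac{K^{d-2s}}{2s-d}$---is correct.
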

\begin{proof}
First, we use Cauchy-Schwarz to get 
    \begin{equation*}
        \begin{split}
             \sum_{ m \in \integers^d_{> K}}\, |\inner{\varphi_{-m}}{u}| &= \sqrt{ \sum_{ m \in \integers^d_{> K}}\, 
 (1+ |m|_{\infty}^{2s})|\inner{\varphi_{-m}}{u}|^2} \,\,  \sqrt{\sum_{ m \in \integers^d_{> K}} \frac{1}{ (1+ |m|_{\infty}^{2s}) }}
        \end{split}
    \end{equation*}
 Lemma \ref{lem:Fourier_sum_sq} implies that the first term is $\leq \norm{u}_{\Hcal^s}$.
To bound the second term, note that  for any $j \in \naturals$, we have $|\{m \in \integers^d \, :\, |m|_{\infty}=j\}| = 2(2j+1)^{d-1}$. This is because one of the entry of $m$ has to be $\pm j$ and other $d-1$ entries could be anything in $\{-j \ldots, -1, 0, 1, \ldots, j\}$. So, 
 \begin{equation*}
     \begin{split}
         \sum_{ m \in \integers^d_{> K}} \frac{1}{ (1+ |m|_{\infty}^{2s})} &=  \sum_{j >K }\frac{2 \, (2j+1)^{d-1}}{(1+j^{2s})} \leq 3^d \sum_{j > K}\frac{1}{j^{2s-d+1}} \leq 3^d \int_{K}^{\infty} t^{-2s+d-1}\, dt= \frac{3^d}{2s-d}\, \frac{1}{K^{2s-d}},
     \end{split}
 \end{equation*}
for all $s>d/2$. Thus, overall, we obtain 
\[\sum_{ m \in \integers^d_{> K}}\, |\inner{\varphi_{-m}}{u}| \leq \norm{u}_{\Hcal^s} \, \sqrt{\frac{3^d}{ 2s-d}}\, \frac{1}{\sqrt{K^{2s-d}}},\]
completing our proof.
\end{proof}

\begin{lemma}\label{lem:DisFour_per}
Let $\operatorname{G}:= \left\{j/N : j \in \{0, \ldots, N-1\}^d  \right\}$ be the $N$-uniform grid of $[0,1]^d$. Then, for any $m \in \integers^d_{<N}$, we have 
\[\frac{1}{N^d} \sum_{x \in \operatorname{G}} e^{2\pi \imaginary \inner{k-m}{x}} = \indicator[k \equiv m \, (\text{\emph{mod} } N)].\]
\end{lemma}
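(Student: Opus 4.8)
The plan is to reduce the $d$-dimensional sum to a product of one-dimensional sums, each of which is a classical finite geometric series. Write $m = (m_1, \dots, m_d)$ and $k = (k_1, \dots, k_d)$, and write each grid point as $x = (j_1/N, \dots, j_d/N)$ with $j_\ell \in \{0, \dots, N-1\}$. Since $\inner{k-m}{x} = \sum_{\ell=1}^d (k_\ell - m_\ell) j_\ell / N$, the exponential factors across coordinates:
\[
\frac{1}{N^d} \sum_{x \in \operatorname{G}} e^{2\pi \imaginary \inner{k-m}{x}} = \prod_{\ell=1}^d \left( \frac{1}{N} \sum_{j_\ell = 0}^{N-1} e^{2\pi \imaginary (k_\ell - m_\ell) j_\ell / N} \right).
\]

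The key step is then the one-dimensional identity: for any integer $a$,
\[
\frac{1}{N} \sum_{j=0}^{N-1} e^{2\pi \imaginary a j / N} = \indicator[a \equiv 0 \,(\text{mod } N)].
\]
If $a \equiv 0 \pmod N$, then $e^{2\pi \imaginary a j/N} = 1$ for every $j$, so the sum is $N$ and the normalized sum is $1$. If $a \not\equiv 0 \pmod N$, then $z := e^{2\pi \imaginary a/N} \neq 1$, and the finite geometric series gives $\sum_{j=0}^{N-1} z^j = (z^N - 1)/(z - 1) = 0$ since $z^N = e^{2\pi \imaginary a} = 1$. Applying this with $a = k_\ell - m_\ell$ for each $\ell$, the product equals $\prod_{\ell=1}^d \indicator[k_\ell \equiv m_\ell \,(\text{mod } N)] = \indicator[k \equiv m \,(\text{mod } N)]$.

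Finally, I should note where the hypothesis $m \in \integers^d_{<N}$ enters: it is not actually needed for the displayed identity itself, which holds for all $k, m \in \integers^d$; rather, the restriction $|m|_\infty < N$ is the regime in which the congruence condition $k \equiv m \pmod N$ has a clean interpretation for the intended application (aliasing of Fourier modes), ensuring that within a single period window the only mode $m$ aliasing to a given $k$ is $m$ itself. The main "obstacle" here is merely bookkeeping — correctly factoring the multi-index inner product and invoking the geometric sum — so the proof is short; there is no substantive difficulty. I would present the factorization, then the one-dimensional lemma with its two cases, then conclude.
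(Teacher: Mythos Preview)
Your proposal is correct and follows essentially the same approach as the paper: factor the $d$-dimensional sum into a product of one-dimensional sums via the decomposition of $\inner{k-m}{x}$, then evaluate each one-dimensional sum by splitting into the two cases $k_\ell \equiv m_\ell \pmod N$ (each term equals $1$) and $k_\ell \not\equiv m_\ell \pmod N$ (geometric series summing to zero). The only cosmetic difference is that the paper first establishes the $d=1$ case and then factors, whereas you factor first and then invoke the one-dimensional identity; your remark that the hypothesis $|m|_\infty < N$ is not needed for the identity itself is also accurate.
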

\noindent Here, we say $k \equiv m (\text{mod } N)$ if $\exists \ell \in \integers^d$ such that $k = N \ell + m$. 
\begin{proof}
   We first prove it for $d=1$. For this case, we need to show that
    \[\frac{1}{N} \sum_{j=0}^{N-1} e^{2\pi \imaginary (k-m) \frac{j}{N}} = \indicator[k \equiv m (\text{mod  } N)].\]
 First, consider the case where $k = 
    \tau N + m$ for some $\tau \in \integers$.  Then, $e^{2\pi \imaginary (k-m) \frac{j}{N}} = e^{2\pi \imaginary \tau \, j } = 1$ by Euler's identity. Thus, the overall sum must be $1$. Next, assume that $k \not \equiv m $ (mod $N)$. Then, the geometric series formula implies that 
    \[\frac{1}{N} \sum_{j=0}^{N-1} e^{2\pi \imaginary (k-m) \frac{j}{N}} = \frac{1}{N} \, \, \frac{1- e^{2 \pi \imaginary (k-m)j}}{1-e^{2\pi \imaginary (k-m) \frac{j}{N}}} = 0.\]
    Here, the final equality holds because  $e^{2 \pi \imaginary (k-m)j} =1$ by Euler's identity, whereas  $e^{2\pi \imaginary (k-m) \frac{j}{N}} \neq 1$ for every $j \in \{0, 1\,\ldots, N-1\}$. This completes our proof for the case $d=1$.
    
    Next, to prove it for general $d$, we write the sum as $d$-fold summation
    \[\frac{1}{N^d} \sum_{x \in \operatorname{G}} e^{2\pi \imaginary \inner{k-m}{x}} = \frac{1}{N^d} \sum_{j_1=0}^{N-1} \ldots \sum_{j_d=0}^{N-1} e^{2 \pi \imaginary (k_{1}-m_{1}) \frac{j_1}{N}} \ldots e^{2 \pi \imaginary (k_{d}-m_{d}) \frac{j_d}{N}} =  \prod_{t=1}^d \frac{1}{N} \sum_{j_t=0}^{N-1} e^{2\pi \imaginary (k_t-m_t) \frac{j_t}{N}}.\]
    Using the result of $d=1$ case for each term in the product, we have
    \[\frac{1}{N^d} \sum_{x \in \operatorname{G}} e^{2\pi \imaginary \inner{k-m}{x}} 
 = \prod_{t=1}^d \, \indicator[k_t \equiv m_t \, \text{(mod $N$)}] = \indicator[k \equiv m \,\, ( \text{mod} \, N)].  \]
\end{proof}

\begin{lemma}\label{DFT_CFT}
    Let $u \in \Hcal^s(\torus^d, \reals)$ such that $\norm{u}_{\Hcal^s} \leq B$ and  $ u^{N} := \{u(x) \, : x \in \operatorname{G} \}$ be its values on the uniform grid $G$. Then, for all $|m|_{\infty} < N$, we have
    \[|\operatorname{DFT}(u^N)(-m) - \inner{\varphi_{-m}}{u}| \,\, \leq\,\,   \left|\sum_{\ell \in \integers^d \backslash \{\bf{0}\}} \inner{\varphi_{-(\ell N+m)}}{u} \right|. \]
\end{lemma}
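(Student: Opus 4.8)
The plan is to establish the classical \emph{aliasing identity}: the $\operatorname{DFT}$ of $u$ on the grid $\operatorname{G}$ recovers not the single Fourier coefficient $\inner{\varphi_{-m}}{u}$ but the sum of all coefficients at frequencies congruent to $m$ modulo $N$. In fact I expect to prove the stronger statement that the left-hand side equals $\big|\sum_{\ell \in \integers^d \setminus \{\mathbf{0}\}} \inner{\varphi_{-(\ell N + m)}}{u}\big|$ with \emph{equality}, and the claimed inequality follows trivially.

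First I would unfold the definition, $\operatorname{DFT}(u^N)(-m) = \frac{1}{N^d}\sum_{x \in \operatorname{G}} u(x)\, e^{-2\pi\imaginary\inner{x}{m}}$. Since $s > d/2$, Lemma \ref{lem:fourier_sum_abs} (applied with $K=0$) gives $\sum_{k \in \integers^d} |\inner{\varphi_{-k}}{u}| < \infty$, so the Fourier series of $u$ converges uniformly, hence pointwise, on $\torus^d$, and we may substitute $u(x) = \sum_{k \in \integers^d} \inner{\varphi_{-k}}{u}\, \varphi_k(x)$ at each $x \in \operatorname{G}$. Because the series converges absolutely and $\operatorname{G}$ is a \emph{finite} set, the (finite) sum over $x \in \operatorname{G}$ and the sum over $k$ can be interchanged, yielding
\[\operatorname{DFT}(u^N)(-m) = \sum_{k \in \integers^d} \inner{\varphi_{-k}}{u}\left(\frac{1}{N^d}\sum_{x \in \operatorname{G}} e^{2\pi\imaginary\inner{k-m}{x}}\right).\]

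Next I would invoke Lemma \ref{lem:DisFour_per} (applicable since $|m|_\infty < N$): the inner bracket equals $\indicator[k \equiv m \ (\mathrm{mod}\ N)]$. Writing the congruence class of $m$ as $\{\ell N + m : \ell \in \integers^d\}$, this collapses the outer sum to $\operatorname{DFT}(u^N)(-m) = \sum_{\ell \in \integers^d} \inner{\varphi_{-(\ell N + m)}}{u}$. Separating the $\ell = \mathbf{0}$ term gives
\[\operatorname{DFT}(u^N)(-m) - \inner{\varphi_{-m}}{u} = \sum_{\ell \in \integers^d \setminus \{\mathbf{0}\}} \inner{\varphi_{-(\ell N + m)}}{u},\]
and taking absolute values finishes the proof.

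The only real subtlety — and the step I would be most careful about — is the legitimacy of the termwise substitution and interchange of the two sums: it rests entirely on the absolute summability of the Fourier coefficients of $u$, which is exactly where the hypothesis $s > d/2$ enters, combined with the fact that $\operatorname{G}$ is finite so that no delicate double-limit argument is needed. Everything else is bookkeeping with the mod-$N$ structure of the grid, already packaged in Lemma \ref{lem:DisFour_per}. Note that the quantitative bound $\norm{u}_{\Hcal^s} \leq B$ plays no role in this lemma beyond ensuring $u \in \Hcal^s$; it matters only downstream, when this identity is combined with Lemma \ref{lem:fourier_sum_abs} to bound the aliasing sum on the right-hand side.
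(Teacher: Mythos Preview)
Your proof is correct and follows essentially the same route as the paper: both establish the aliasing identity by substituting the Fourier expansion of $u$ into the $\operatorname{DFT}$ and invoking Lemma~\ref{lem:DisFour_per}, with absolute summability of the Fourier coefficients (from $s>d/2$) justifying the interchange. The only cosmetic difference is that the paper carries an explicit truncation parameter $M$ and a uniform remainder term, then sends $M\to\infty$ via the Weierstrass $M$-test, whereas you invoke absolute convergence directly to swap the finite grid sum with the infinite Fourier sum in one step; your version is a clean streamlining of the same argument, and as you note it actually yields equality rather than merely the stated inequality.
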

\begin{proof}
    Recall that 
    \[\operatorname{DFT}(u^N)(-m) = \frac{1}{N^d} \sum_{x \in \operatorname{G}} \, u(x) \, e^{- 2\pi \imaginary \inner{m}{x}}.\]
    Pick some $M > N$ and write
   
    \[u(x) = \sum_{k \in \integers^d_{\leq M}} \inner{\varphi_{-k} }{u}\, e^{2\pi \imaginary \inner{k}{x}} + \left( u(x) - \sum_{k \in \integers^d_{\leq M}} \inner{\varphi_{-k} }{u}\, e^{2\pi \imaginary \inner{k}{x}} \right). \]

    We can then write
    \begin{equation*}
        \begin{split}
            \operatorname{DFT}&(u^N)(-m) \\
            &=  \frac{1}{N^d} \sum_{x \in \operatorname{G}} \,\left(  \sum_{k \in \integers^d_{\leq M}} \inner{\varphi_{-k} }{u}\, e^{2\pi \imaginary \inner{k}{x}} + \left( u(x) - \sum_{k \in \integers^d_{\leq M}} \inner{\varphi_{-k} }{u}\, e^{2\pi \imaginary \inner{k}{x}} \right) \right) \, e^{- 2\pi \imaginary \inner{m}{x}}\\
            &= \sum_{k \in \integers^d_{\leq M}} \inner{\varphi_{-k}}{u} \, \left(\frac{1}{N^d}\sum_{x \in \operatorname{G}}\, e^{2\pi \imaginary \inner{k-m}{x}} \right) + \frac{1}{N^d} \sum_{x \in \operatorname{G}} \left( u(x) - \sum_{k \in \integers^d_{\leq M}} \inner{\varphi_{-k} }{u}\, e^{2\pi \imaginary \inner{k}{x}} \right)  \, e^{- 2\pi \imaginary \inner{m}{x}} \\
            &= \sum_{k \in \integers^d_{\leq M}} \inner{\varphi_{-k}}{u}\, \indicator[k \equiv m (\text{mod } N)] + \frac{1}{N^d} \sum_{x \in \operatorname{G}} \left( u(x) - \sum_{k \in \integers^d_{\leq M}} \inner{\varphi_{-k} }{u}\, e^{2\pi \imaginary \inner{k}{x}} \right)  \, e^{- 2\pi \imaginary \inner{m}{x}},
        \end{split}
    \end{equation*}
    where the final equality follows from Lemma \ref{lem:DisFour_per} as $|m|_{\infty}< N$. Note that we can swap sums over $\operatorname{G}$ and $\integers^d$ in the first term because the sums converge absolutely when $s>d/2$ (see Lemma \ref{lem:fourier_sum_abs}).  Thus, we obtain
    \begin{equation*}
        \begin{split}
           |\operatorname{DFT}(u^N)(-m) - \inner{\varphi_{-m}}{u}|  \leq \Bigg|\sum_{k \in \integers^d_{\leq M}} \inner{\varphi_{-k}}{u}\, &\indicator[k \equiv m (\text{mod } N)] - \inner{\varphi_{-m}}{u} \Bigg|\\
           &+ \left|\frac{1}{N^d} \sum_{x \in \operatorname{G}} \left( u(x) - \sum_{k \in \integers^d_{\leq M}} \inner{\varphi_{-k} }{u}\, e^{2\pi \imaginary \inner{k}{x}} \right)  \, e^{- 2\pi \imaginary \inner{m}{x}} \right|\\
        \end{split}
    \end{equation*}
Using the uniform bound over $x \in \operatorname{G}$ for the second term and the following identity for the first term
\[\sum_{k \in \integers^d_{\leq M}} \inner{\varphi_{-k}}{u}\, \indicator[k \equiv m (\text{mod } N)] - \inner{\varphi_{-m}}{u} = \sum_{k \in \integers^d_{\leq M} \backslash \{m\}} \inner{\varphi_{-k}}{u}\, \indicator[k \equiv m (\text{mod } N)],\]
we obtain
\begin{equation*}
    \begin{split}
       |\operatorname{DFT}(u^N)(-m) &- \inner{\varphi_{-m}}{u}| \\
       &\leq \left| \sum_{k \in \integers^d_{\leq M} \backslash \{m\}} \inner{\varphi_{-k}}{u}\, \indicator[k \equiv m (\text{mod } N)]\right| + \sup_{x \in \operatorname{G}}\, \,  \left| u(x) - \sum_{k \in \integers^d_{\leq M}} \inner{\varphi_{-k} }{u}\, e^{2\pi \imaginary \inner{k}{x}}\right|  
    \end{split}
\end{equation*}

Recall that we have (i) $|\inner{\varphi_{-k} }{u}\, e^{2\pi \imaginary \inner{k}{x}}| \leq B$ and   $\sum_{k \in \integers^d}\left| \inner{\varphi_{-k} }{u}\, e^{2\pi \imaginary \inner{k}{x}}\right| <\infty $ for $s>d/2$ using Lemma \ref{lem:fourier_sum_abs}. The Weierstrass M-test implies that the second term converges to $0$ uniformly over $x \in \torus^d$ as $M \to \infty$. Thus, we obtain 
\begin{equation*}
    \begin{split}
      \sum_{k \in \integers^d_{\leq M} \backslash \{m\}} \inner{\varphi_{-k}}{u}\, \indicator[k \equiv m (\text{mod } N)] &\xrightarrow[M \to \infty]{}  \sum_{k \in \integers^d \backslash \{m\}} \inner{\varphi_{-k}}{u}\, \indicator[k \equiv m (\text{mod } N)]\\
      &=  \sum_{\ell \in \integers^d \backslash \{\bf{0}\}} \inner{\varphi_{-(\ell N+m)}}{u},
    \end{split}
\end{equation*}
which completes our proof. 

\begin{lemma}\label{lem:lattice_sum}
For any $s \in \naturals$ such that $s>d/2$, we have
\[\sum_{k \in \integers^d \backslash \{\bf{0}\}} \frac{1}{|k|_{\infty}^{2s}} \leq \pi^{2}\,3^{d-2}. \]
\end{lemma}
\begin{proof} Recall that $|\{m \in \integers^d \, :\, |m|_{\infty}=j\}| = 2(2j+1)^{d-1}$. This is because one of the entry of $m$ has to be $\pm j$ and other $d-1$ entries could be anything in $\{-j \ldots, -1, 0, 1, \ldots, j\}$.  Thus, 
\[\sum_{\ell \in \integers^d \backslash \{\bf{0}\}} \frac{1}{|\ell|_{\infty}^{2s}} \leq \sum_{j=1}^{\infty}\frac{2 (2j+1)^{d-1}}{j^{2s}} \leq 2\cdot 3^{d-1} \sum_{j=1}^{\infty} \frac{1}{j^{2s-d+1}} \leq 2\cdot3^{d-1} \sum_{j=1}^{\infty}\frac{1}{j^{2}} = \frac{2\cdot 3^{d-1} \pi^2}{6} =\pi^2 \,3^{d-2}.\]
 The third inequality uses $2s-d \leq 1$ as $s>d/2$ and $s \in \naturals$.     
\end{proof}


\end{proof}

\section{Proof of Upper Bound (Theorem \ref{thm:rates})}\label{appdx:proof_upper}
Before we prove Theorem \ref{thm:rates}, we need some notation. For any $T \in \Tcal$ such that $T = \sum_{m \in \integers^d} \lambda_m\,  \varphi_{m} \otimes \varphi_{-m}$, we define

\begin{equation*}
    \begin{split}
        r(T) &:= \expect_{(v,w) \sim \mu} \left[\norm{Tv-w}_{L^2}^2 \right]= \expect_{(v,w) \sim \mu} \left[\sum_{m \in \integers^d} \left|\lambda_m \inner{\varphi_{-m}}{v}- \inner{\varphi_{-m}}{w} \right|^2 \right]\\
        \widehat{r}(T) &:= \frac{1}{n}\sum_{i=1}^n \norm{Tv_i-w_i}_{L^2}^2 = \frac{1}{n} \sum_{i=1}^n \sum_{m \in \integers^d} \left|\lambda_m \inner{\varphi_{-m}}{v_i}- \inner{\varphi_{-m}}{w_i} \right|^2
    \end{split}
\end{equation*}
where $\{(v_i,w_i)\}_{i=1}^n$ is the sample accessible to the learner on a uniform grid of $[0,1]^d$. 
Then, using these definitions, we can write
\[\mathcal{E}_n(\widehat{T}_{K}^N, \Tcal, \mu) = \expect \left[r(\widehat{T}_K^N) - \inf_{T \in \Tcal} r(T) \right] = \expect \left[r(\widehat{T}_K^N) - \inf_{T \in \Tcal_K} r(T) \right] + \inf_{T \in \Tcal_K} r(T) - \inf_{T \in \Tcal}\, r(T),\]
where $\Tcal_K$ is the truncated class defined as
\[\Tcal_{K} :=\left\{\sum_{m \in \integers^d_{\leq K} }\, \lambda_m \,\,  \varphi_{m} \otimes \varphi_{-m} \, \Big| \,   \sup_{m \in \integers^d_{\leq K}} |\lambda_m| \leq C \right\}.\]
Furthermore, defining
\[\widehat{T}_K \in \argmin_{T \in \Tcal_K} \, \widehat{r}(T), \]
we can decompose
\[\mathcal{E}_n(\widehat{T}_{K}^N, \Tcal, \mu) = \underbrace{\expect \left[r(\widehat{T}_K^N) - r(\widehat{T}_K) \right]}_{\text{(I)}} + \underbrace{\expect \left[ r(\widehat{T}_K)- \inf_{T \in \Tcal_K} r(T) \right]}_{\text{(II)}} + \underbrace{\inf_{T \in \Tcal_K} r(T) - \inf_{T \in \Tcal}\, r(T)}_{\text{(III)}}. \]

\noindent First, it is easy to see that
\[\text{(III)} \leq \sup_{T \in \Tcal} \, \, \inf_{T_K \in \Tcal_K} |r(T) - r(T_K)|. \]

\vspace{0.5cm}
 To upper bound $\text{(II)}$, let $T^{\star}_K \in \Tcal_{K}$ such that $r(T^{\star}_K) = \inf_{T \in \Tcal_K}r(T)$. Formally, for every $\varepsilon>0$, we may only be guaranteed the existence of $T^{\star}_K$ such that $r(T^{\star}_K) \leq  \inf_{T \in \Tcal_K}r(T) + \varepsilon$. However, as $\varepsilon$ can be made arbitrarily small, we can just choose it to be smaller than any error bound we obtain at the end. So, the arguments below are rigorously justified.

Given such $T_K^{\star}$, we can write
\[\text{(II)} = \expect[r(\widehat{T}_K) - r(T_K^{\star})] = \expect[r(\widehat{T}_K) - \widehat{r}(\widehat{T}_K)] + \expect[\widehat{r}(\widehat{T}_K) -  \widehat{r}(T_K^{\star})] +  \expect[\widehat{r}(T_K^{\star}) - r(T_K^{\star})].\]
The last term of the sum vanishes because $\expect[\widehat{r}(T_K^{\star})] = r(T_K^{\star})$. As for the second term, $\widehat{T}_K$ minimizes empirical loss over the samples, implying $\widehat{r}(\widehat{T}_K) \leq \widehat{r}(T_K^{\star})$. For the first term, we use the trivial bound $r(\widehat{T}_K) - \widehat{r}(\widehat{T}_K) \leq \sup_{T \in \Tcal_K}|r(T)-\widehat{r}(T)|.$
Overall, we obtain 
\[\text{(II)} \leq \expect \left[\sup_{T \in \Tcal_K}|r(T)-\widehat{r}(T)| \right].\]

\vspace{0.5cm}

Finally, we upper bound the term $\text{(I)}$. Given $K$ and $N$, for any $T \in \Tcal_K$ such that $T = \sum_{m \in \integers^d_{\leq K}}\lambda_m \,\varphi_m \otimes \varphi_{-m}$, define 
\[\widehat{r}_N(T) := \frac{1}{n}\, \sum_{i=1}^n \sum_{\substack{m \in \integers^d_{\leq K}}} \left|\lambda_m \operatorname{DFT}(v_i^N)(-m)- \operatorname{DFT}(w_i^N)(-m) \right|^2 +  \frac{1}{n}\, \sum_{i=1}^n \sum_{\substack{m \in \integers^d_{> K}}} |\inner{\varphi_{-m}}{w_i}|^2. \]
Technically, the term $\widehat{r}_N(T) $ also depends on $K$, but we drop $K$ to avoid cluttered notation. Here, the first term above is the empirical DFT-based least squares loss of $T$ define in \ref{sec:estimator}. The second term is introduced purely for technical reasons to make our calculations work (see Section \ref{sec:discretization_proof}). Since the second term does not depend on $T$, our estimator $\widehat{T}_K^N$ is still the operator obtained by minimizing $\widehat{r}_N$. Then, note that 
\[\text{(I)} = \expect[r(\widehat{T}_K^N) - \widehat{r}_N(\widehat{T}_K^N)] + \expect[\widehat{r}_N(\widehat{T}_K^N) - \widehat{r}_N(\widehat{T}_K) ] \, +  \expect[\widehat{r}_N(\widehat{T}_K) - r(\widehat{T}_K) ] \]
Note that the second term above satisfies  $\widehat{r}_N(\widehat{T}_K^N) - \widehat{r}_N(\widehat{T}_K) \leq 0$ almost surely because $\widehat{T}_K^N$ minimizes $\widehat{r}_N(T)$ over all $T \in \Tcal_K$. For the first and the third term, we use the bound 

\[\expect[r(\widehat{T}_K^N) - \widehat{r}_N(\widehat{T}_K^N) ] \leq \expect[\sup_{T \in \Tcal_K } |r(T) - \widehat{r}_N(T)|] \quad \text{ and } \quad \expect[\widehat{r}_N(\widehat{T}_K) - r(\widehat{T}_K) ] \leq \expect[\sup_{T \in \Tcal_K } |r(T) - \widehat{r}_N(T)|]. \] 
 Thus, we have
\[(\text{I}) \leq 2 \expect \Big[\sup_{T \in \Tcal_K } |r(T) - \widehat{r}_N(T)| \Big] \leq 2 \expect \Big[\sup_{T \in \Tcal_K } |r(T) - \widehat{r}(T)| \Big] + 2 \expect\Big[\sup_{T \in \Tcal_K } |\widehat{r}(T) - \widehat{r}_N(T)| \Big], \]
where the final step uses the triangle inequality. Combining everything, we have established that
\[\mathcal{E}_n(\widehat{T}_{K}^N, \Tcal, \mu) \leq 3 \expect \Big[\sup_{T \in \Tcal_K } |r(T) - \widehat{r}(T)| \Big] +  2 \expect\Big[\sup_{T \in \Tcal_K } |\widehat{r}(T) - \widehat{r}_N(T)| \Big] + \, \, \sup_{T \in \Tcal} \, \, \inf_{T_K \in \Tcal_K} |r(T) - r(T_K)|. \]
The first term is the statistical error, the second is the discretization error, and the final is the truncation error. Next, we bound each of these terms individually.

\subsection{upper bound on the truncation error $\sup_{T \in \Tcal}  \, \inf_{T_K \in \Tcal_K} |r(T) - r(T_K)|$}
Pick any $T \in \Tcal$. Then, there exists a sequence $\{\lambda_m\}_{m \in \integers^d}$ such that $T = \sum_{m \in \integers^d} \lambda_m \, \varphi_{m} \otimes \varphi_{-m}$. Define
\[T_K := \sum_{m \in \integers^d_{\leq K}} \lambda_m \, \varphi_{m} \otimes \varphi_{-m}.\]
Clearly, $T_K \in \Tcal_K$.
Then, we have 
\begin{equation*}
    \begin{split}
        r(T) - r(T_K) &= \expect_{(v,w) \sim \mu}\, [\norm{Tv-w}_{L^2}^2- \norm{T_Kv-w}_{L^2}^2] \\
        &= \expect_{(v,w) \sim \mu}\, [\norm{Tv}_{L^2}^2 - \norm{T_Kv}_{L^2}^2 + 2\inner{(T_K-T)v}{w}] \\
        &\leq \expect_{(v,w) \sim \mu}\left[\sum_{m \in \integers^d_{>K} }\, |\lambda_m|^2 |\inner{\varphi_{-m}}{v}|^2 + 2 \sum_{m \in \integers^d_{>K} } \Big|\lambda_m \inner{\varphi_{-m}}{v} \inner{\varphi_{m}}{w} \Big| \right]\\
    \end{split}
\end{equation*}
The final equality uses the following facts. First, we have $\norm{Tv}_{L^2}^2 = \norm{\sum_{m \in \integers^d}\, \lambda_m \inner{\varphi_{-m}}{v} \varphi_m}^2_{L^2} = \sum_{m \in \integers^d} |\lambda_m|^2 |\inner{\varphi_{-m}}{v}|^2$. Analogously, $\norm{T_Kv}_{L^2}^2 = \sum_{m \in \integers^d_{\leq K}} |\lambda_m|^2 |\inner{\varphi_{-m}}{v}|^2$. As for the second term, we use 
\[\inner{(T_K-T)v}{w} = \inner{\sum_{m \in \integers^d_{> K}}\lambda_m \inner{\varphi_{-m}}{v} \varphi_m}{w}= \sum_{m \in \integers^d_{> K}} \lambda_m \inner{\varphi_{-m}}{v} \inner{\varphi_m}{w}. \]

Next, using the fact that $|\lambda_m| \leq C$ followed by Lemma \ref{lem:Fourier_sum_sq}, the first term is
\[\sum_{m \in \integers^d_{>K} }\,  |\lambda_m|^2 |\inner{\varphi_{-m}}{v}|^2 \leq \frac{B^2 C^2}{K^{2s}}.  \]
As for the second term, using $|\lambda_m| \leq C$ followed by Cauchy-Schwarz implies
\[2 \sum_{m \in \integers^d_{>K} } |\lambda_m \inner{\varphi_{-m}}{v}\inner{\varphi_{m}}{w}| \leq 2C \sqrt{\sum_{m \in \integers^d_{>K}} |\inner{\varphi_{-m}}{v}|^2} \sqrt{\sum_{m \in \integers^d_{>K}} |\inner{\varphi_{m}}{w}|^2} \leq \frac{2CB^2}{K^{2s}},    \]
where the final inequality holds because of Lemma \ref{lem:Fourier_sum_sq}. Since $T \in \Tcal$ is arbitrary, we have shown that
\[\sup_{T \in \Tcal}  \, \inf_{T_K \in \Tcal_K} |r(T) - r(T_K)| \leq \frac{B^2C(C+2)}{K^{2s}} \leq \frac{B^2(C+1)^2}{K^{2s}}.\]


\subsection{upper bound on the discretization error $2\expect\Big[\sup_{T \in \Tcal_K } |\widehat{r}(T) - \widehat{r}_N(T)| \Big]$}\label{sec:discretization_proof}
Fix $T \in \Tcal_K$. Then, there exists $\{\lambda_m\}_{m \in \integers^d_{\leq K}}$ with $|\lambda_m| \leq C$ such that $T = \sum_{\integers^d_{\leq K}} \, \lambda_m \, \varphi_{m} \otimes \varphi_{-m}$. Then, recall that
\[\widehat{r}_N(T) := \frac{1}{n}\, \sum_{i=1}^n \sum_{\substack{m \in \integers^d_{\leq K}}} \left|\lambda_m \operatorname{DFT}(v_i^N)(-m)- \operatorname{DFT}(w_i^N)(-m) \right|^2 +  \frac{1}{n}\, \sum_{i=1}^n \sum_{\substack{m \in \integers^d_{> K}}} |\inner{\varphi_{-m}}{w_i}|^2.\]
Moreover, we also have 
\[  \widehat{r}(T)  = \frac{1}{n} \sum_{i=1}^n \sum_{m \in \integers^d_{\leq K}} \left|\lambda_m \inner{\varphi_{-m}}{v_i}- \inner{\varphi_{-m}}{w_i} \right|^2 +  \frac{1}{n}\, \sum_{i=1}^n \sum_{\substack{m \in \integers^d_{> K}}} |\inner{\varphi_{-m}}{w_i}|^2,\]
which yields
\[\widehat{r}_N(T) - \widehat{r}(T) = \frac{1}{n} \sum_{i=1}^n \sum_{m \in \integers_{\leq K}^d}\left(\left|\lambda_m \operatorname{DFT}(v_i^N)(-m)- \operatorname{DFT}(w_i^N)(-m) \right|^2 -  \left|\lambda_m \inner{\varphi_{-m}}{v_i}- \inner{\varphi_{-m}}{w_i} \right|^2\right). \]

Next, we define 
\begin{equation*}
    \begin{split}
        \alpha_{im} = \operatorname{DFT}(v_i^N)(-m) - \inner{\varphi_{-m}}{v_i} \quad \text{ and } \quad 
        \beta_{im} = \operatorname{DFT}(w_i^N)(-m) - \inner{\varphi_{-m}}{w_i}. 
    \end{split}
\end{equation*}
We can then write
\begin{equation*}
    \begin{split}
&\left|\lambda_m \operatorname{DFT}(v_i^N)(-m)- \operatorname{DFT}(w_i^N)(-m) \right|^2\\
        &= |\lambda_m \inner{\varphi_{-m}}{v_i} - \inner{\varphi_{-m}}{w_i}  + \lambda_m \, \alpha_{im} - \beta_{im}|^2\\
        &\leq   |\lambda_m \inner{\varphi_{-m}}{v_i} - \inner{\varphi_{-m}}{w_i}|^2 + 2 |\lambda_m \inner{\varphi_{-m}}{v_i} - \inner{\varphi_{-m}}{w_i}|\,\, |\lambda_m \alpha_{im} - \beta_{im}|\, + |\lambda_{m} \alpha_{im}-\beta_{im}|^2.
    \end{split}
\end{equation*}
Thus, we obtain
\begin{equation*}
    \begin{split}
         |\widehat{r}_{N}(T) - \widehat{r}(T)| &\leq \frac{1}{n} \sum_{i=1}^n \sum_{m \in \integers^d_{\leq K}}\left( 2 |\lambda_m \inner{\varphi_{-m}}{v_i} - \inner{\varphi_{-m}}{w_i}| \,  |\lambda_m \alpha_{im} - \beta_{im}|\, + |\lambda_{m} \alpha_{im}-\beta_{im}|^2 \right)\\
         &\leq \max_{i \in [n]} \sum_{m \in \integers^d_{\leq K}} 2\,  \left(|\lambda_m \inner{\varphi_{-m}}{v_i}| + |\inner{\varphi_{-m}}{w_i}|\right)\, |\lambda_m \alpha_{im} - \beta_{im}|\,| + |\lambda_{m} \alpha_{im}-\beta_{im}|^2.
    \end{split}
\end{equation*}

Next, using Cauchy-Schwarz inequality, the first term of the summand can be bounded as
\begin{equation*}
    \begin{split}
    \sum_{m \in \integers^d_{\leq K}} & \left|\lambda_m \inner{\varphi_{-m}}{v_i}|\,\, |\lambda_m \alpha_{im} - \beta_{im} \right| \\
       &\leq \sqrt{\sum_{m \in \integers^d_{\leq K}}|\lambda_m|^2 
 (1+|m|_{\infty}^{2s}) \, |\inner{\varphi_{-m}}{v_i}|^2 } \sqrt{\sum_{m \in \integers^d_{\leq K}} \frac{|\lambda_m \alpha_{im} - \beta_{im}|^2}{1+|m|_{\infty}^{2s}} } \\
 &\leq B\,C \sqrt{ \sum_{m \in \integers^d_{\leq K}} \frac{|\lambda_m \alpha_{im} - \beta_{im}|^2}{1+|m|_{\infty}^{2s}} },
    \end{split}
\end{equation*}
where the final inequality uses Lemma \ref{lem:Fourier_sum_sq} and the fact that $|\lambda_m| \leq C$. Similar arguments show that 
\begin{equation*}
    \begin{split}
    \sum_{m \in \integers^d_{\leq K}} |\inner{\varphi_{-m}}{w_i}| \,\, |(\lambda_m \alpha_{im} - \beta_{im}) | 
 \leq B \sqrt{\sum_{m \in \integers^d_{\leq K}} \frac{|\lambda_m \alpha_{im} - \beta_{im}|^2}{1+|m|_{\infty}^{2s}} }.
    \end{split}
\end{equation*}
Overall, we have shown that 
\[ |\widehat{r}_{N}(T) - \widehat{r}(T)| \leq \max_{i \in [n]} \left( 
2B(C+1) \sqrt{\sum_{m \in \integers^d_{\leq K}} \frac{|\lambda_m \alpha_{im} - \beta_{im}|^2}{1+|m|_{\infty}^{2s}} }  + \sum_{m \in \integers^d_{\leq K}} |\lambda_{m} \alpha_{im}-\beta_{im}|^2 \right).  \]

Now, recall that Lemma \ref{DFT_CFT} implies 
\[\max\{|\alpha_{im}|, |\beta_{im}|\} \leq  \left|\sum_{\ell \in \integers^d \backslash \{\bf{0}\}} \inner{\varphi_{-(\ell N+m)}}{u} \right|, \]
which subsequently yields
\[|\lambda_{m} \alpha_{im}-\beta_{im}|^2 \leq  (C+1)^2\, \left|\sum_{\ell \in \integers^d \backslash \{\bf{0}\}} \inner{\varphi_{-(\ell N+m)}}{u} \right|^2. \]
Thus, we have
\begin{equation*}
    \begin{split}
        \sum_{m \in \integers^d_{\leq K}} &|\lambda_{m} \alpha_{im}-\beta_{im}|^2 \\
        &\leq (C+1)^2\, \sum_{m \in \integers^d_{\leq K}}  \left|\sum_{\ell \in \integers^d \backslash \{\bf{0}\}} \inner{\varphi_{-(\ell N+m)}}{u} \right|^2 \\
        &\leq (C+1)^2  \left(  \sum_{m \in \integers_{\leq K}^d}\,\left( \sum_{\ell \in \integers^d\backslash \{\bf{0}\} }\frac{1}{1+ |m+ \ell N|_{\infty}^{2s}}\right) \sum_{\ell \in \integers^d \backslash \{\bf{0}\}} (1+ |m+ \ell N|_{\infty}^{2s})\, |\inner{\varphi_{-(\ell N+m)}}{u}|^2\right). 
    \end{split},
\end{equation*}
where the final step follows from Cauchy-Schwarz inequality.

To upper bound the first sum within inner parenthesis, note that $|m+ \ell N|_{\infty} \geq |\ell N|_{\infty} - |m|_{\infty} \geq N |\ell|_{\infty} - N/2\geq N/2\,  |\ell|_{\infty}$. Here, we use the fact that $|m|_{\infty}\leq K \leq N/2$.  So, we have 
\[\sum_{\ell \in \integers^d\backslash \{\bf{0}\} }\frac{1}{1+ |m+ \ell N|_{\infty}^{2s}} \leq \left(\frac{2}{N} \right)^{2s} \sum_{\ell \in \integers^d\backslash \{\bf{0}\}} \frac{1}{|\ell|_{\infty}^{2s}} \leq \frac{2^{2s} \pi^2 \, 3^{d-2}}{N^{2s}},\]
where the final inequality uses Lemma \ref{lem:lattice_sum}.
Next, note that
\[\sum_{m \in \integers_{\leq K}^d} \sum_{\ell \in \integers^d \backslash \{\bf{0}\}} (1+ |m+ \ell N|_{\infty}^{2s})\, |\inner{\varphi_{-(\ell N+m)}}{u}|^2 \leq \sum_{k \in \integers^d} (1+ |k|_{\infty}^{2s})\, |\inner{\varphi_{-k}}{u}|^2  \leq B^2, \]
where the second inequality follows from Lemma \ref{lem:Fourier_sum_sq}.
The first inequality holds because for each $k \in \integers^d$, we have $|\{ (m, \ell) \, :\,  m + \ell N = k, \,  m \in \integers^d_{\leq K} \text{ and } \ell \in \integers^d \backslash \{0\}\}| \leq 1$. That is, for each $k \in \integers^d$, there is only one possible pair $(m, \ell)$ such that $k = m + \ell N$. Suppose, for the sake of contradiction, there exists $k \in \integers^d$ such that two distinct pairs exist in the set, namely $(m_1, \ell_1)$ and $(m_2, \ell_2)$. Note that $m_1 + \ell_1 N -(m_2+ \ell_2 N)=k-k=0 $, which implies $(m_1-m_2) = (\ell_2-\ell_1) N$. Clearly, we cannot have $\ell_2 = \ell_1$, otherwise, we will have $m_2 = m_1$, contradicting the fact that there are two distinct pairs. So, we must have $\ell_2 \neq \ell_1$. That is, $|\ell_2-\ell_1|_{\infty} \geq 1$, and thus $|m_{1}-m_2|_{\infty} \geq N$. Moreover,  $|m_1 - m_2|_{\infty} \leq |m_1|_{\infty} + |m_2|_{\infty} \leq 2K  $, which implies that $2K \geq N$. This contradicts the fact that $K < N/2$. Therefore, overall, we have shown that
\[   \sum_{m \in \integers^d_{\leq K}} |\lambda_{m} \alpha_{im}-\beta_{im}|^2 \leq  \frac{2^{2s} \pi^2 \, 3^{d-2}\, B^2 (C+1)^2}{N^{2s}} .  \]

Next, we have 
\[\sqrt{\sum_{m \in \integers^d_{\leq K}} \frac{|\lambda_m \alpha_{im} - \beta_{im}|^2}{1+|m|_{\infty}^{2s}} } \leq  \sqrt{\sum_{m \in \integers^d_{\leq K}}|\lambda_m \alpha_{im} - \beta_{im}|^2} \leq \frac{2^{s} \pi \sqrt{3^{d-2}}\, B(C+1)}{N^s} . \]
Therefore, by combining everything, we have shown that
\[|\widehat{r}_{N}(T) - \widehat{r}(T)| \leq \frac{2^{s+1}  B^2 (C+1)^2}{N^s}\pi  \sqrt{3^{d-2}} + \frac{B^2(C+1)^2 4^{s}}{N^{2s}}\, \pi^2 3^{d-2} \leq 2 \, \frac{2^{s+1}  B^2 (C+1)^2}{N^s}\pi  \sqrt{3^{d-2}}.\]
The final inequality holds when $N^s \geq 2^{s-1}\, \pi \sqrt{3^{d-2}}$, which is satisfied as long as $N \geq 6$. As $T \in \Tcal_{K}$ is arbitrary, we have shown that the discretization error
\[2\expect\Big[\sup_{T \in \Tcal_K } |\widehat{r}(T) - \widehat{r}_N(T)| \Big] \leq \frac{2^{s+3} \pi \sqrt{3^{d-2}} B^2 (C+1)^2 }{N^s}\leq \frac{2^{s+3}\sqrt{\pi^d } B^2(C+1)^2 }{N^s}.   \]

\subsection{Upper Bound on the Statistical Error $3 \expect \left[\sup_{T \in \Tcal_K } |r(T) - \widehat{r}(T)| \right]$}
In fact, we will bound $\expect \left[\sup_{T \in \Tcal } |r(T) - \widehat{r}(T)| \right]$. This can be viewed as the limit of the statistical error as $K \to \infty$. To that end, let $\sigma_1, \ldots, \sigma_n$ denote iid random variables such that $\sigma_i \sim \text{Uniform}(\{-1,1\})$. Standard symmetrization arguments show that
\begin{equation*}
    \begin{split}
       \expect \left[\sup_{T \in \Tcal } |r(T) - \widehat{r}(T)| \right] &\leq 2 \expect \left[ \sup_{T \in \Tcal} \left| \frac{1}{n}\sum_{i=1}^n \sigma_i \norm{Tv_i - w_i}_{L^2}^2\right| \right]\\
        &= 2\expect \left[\, \sup_{|\lambda|_{\ell^{\infty}} \leq C}\,\,  \left| \frac{1}{n} \sum_{i=1}^n \sigma_i \sum_{m \in \integers^d} \left|\lambda_m \inner{\varphi_{-m}}{v_i}- \inner{\varphi_{-m}}{w_i} \right|^2 \right|\right]\\
    \end{split}
\end{equation*}
Note that 
\begin{equation*}
    \begin{split}
        &\left|\lambda_m \inner{\varphi_{-m}}{v_i}- \inner{\varphi_{-m}}{w_i} \right|^2\\
        &= \left(\lambda_m \inner{\varphi_{-m}}{v_i}- \inner{\varphi_{-m}}{w_i}\right) \overline{\left(\lambda_m \inner{\varphi_{-m}}{v_i}- \inner{\varphi_{-m}}{w_i} \right)} \\
        &= \lambda_m \,\overline{ \lambda_m}  \, \inner{\varphi_{-m}}{v_i} \overline{\inner{\varphi_{-m}}{v_i}} - \lambda_m \inner{\varphi_{-m}}{v_i} \overline{\inner{\varphi_{-m}}{w_i}} - \overline{\lambda_m} \inner{\varphi_{-m}}{w_i} \overline{\inner{\varphi_{-m}}{v_i}} + \inner{\varphi_{-m}}{w_i} \overline{\inner{\varphi_{-m}}{w_i}}\, \\
        &= |\lambda_m|^2\, |\inner{\varphi_{-m}}{v_i}|^2 - \left( \lambda_m \inner{\varphi_{-m}}{v_i} \overline{\inner{\varphi_{-m}}{w_i}} + \overline{\lambda_m} \inner{\varphi_{-m}}{w_i} \overline{\inner{\varphi_{-m}}{v_i}} \right)+\, |\inner{\varphi_{-m}}{w_i}|^2.
    \end{split}
\end{equation*}
The first and the last term above are real numbers, so the term in the parenthesis must also be a real number.  Using triangle inequality, the term Rademacher sum above can be upper-bounded as 
\begin{equation*}
    \begin{split}
        \expect& \left[ \, \sup_{|\lambda|_{\ell^{\infty}}  \leq C}\,\,  \left| \frac{1}{n}\sum_{i=1}^n \sigma_i \sum_{m \in \integers^d} \left|\lambda_m \inner{\varphi_{-m}}{v_i}- \inner{\varphi_{-m}}{w_i} \right|^2 \right|\right]\\
        &\leq \underbrace{\expect\left[ \, \sup_{|\lambda|_{\ell^{\infty}}  \leq C}\,\,\left|  \frac{1}{n} \sum_{i=1}^n \sigma_i \sum_{m \in \integers^d} |\lambda_m|^2\, |\inner{\varphi_{-m}}{v_i}|^2 \right| \right]}_{\text{(i)}} + \underbrace{\expect\left[ \, \sup_{|\lambda|_{\ell^{\infty}}  \leq C}\,\,  \Bigg|  \frac{1}{n}\sum_{i=1}^n \sigma_i \sum_{m \in \integers^d}  \lambda_m \inner{\varphi_{-m}}{v_i} \overline{\inner{\varphi_{-m}}{w_i}} \Bigg| \right]}_{\text{(ii)}}\\
        &+ \underbrace{\expect\left[ \, \sup_{|\lambda|_{\ell^{\infty}}  \leq C}\,\,  \Bigg|  \frac{1}{n} \sum_{i=1}^n \sigma_i \sum_{m \in \integers^d}  \overline{\lambda_m} \inner{\varphi_{-m}}{w_i} \overline{\inner{\varphi_{-m}}{v_i}} \Bigg| \right]}_{\text{(iii)}} + \underbrace{\expect\left[  \Bigg|  \frac{1}{n} \sum_{i=1}^n \sigma_i \sum_{m \in \integers^d}  |\inner{\varphi_{-m}}{w_i}|^2 \Bigg| \right]}_{\text{(iv)}}.
    \end{split}
\end{equation*}
Let us start with the term (iv) first. Swapping the sum over $m$ and $i$ and using triangle inequality yields
\begin{equation*}
    \begin{split}
       \text{(iv)}= \expect\left[  \Bigg|  \frac{1}{n} \sum_{i=1}^n \sigma_i \sum_{m \in \integers^d}  |\inner{\varphi_{-m}}{w_i}|^2 \Bigg| \right] &\leq  \sum_{m \in \integers^d} \frac{1}{n}\expect\left[\left| \sum_{i=1}^n \sigma_i \,  |\inner{\varphi_{-m}}{w_i}|^2\right| \right] \\
       &\leq \sum_{m \in \integers^d} \frac{1}{n} \left( \sum_{i=1}^n |\inner{\varphi_{-m}}{w_i}|^4\right)^{1/2},
    \end{split}
\end{equation*}
where the final step follows from Khintchine's inequality. Note that swapping the sums is justified because both sums converge absolutely.

For the term $\text{(iii)}$, swapping the sum over $m$ and $i$ and using the fact that $|\lambda_m| \leq C$ yields
\begin{equation*}
    \begin{split}
     \text{(iii)} =  \expect&\left[ \, \sup_{|\lambda|_{\ell^{\infty}}  \leq C}\,\,  \Bigg| \frac{1}{n} \sum_{i=1}^n \sigma_i \sum_{m \in \integers^d}   \overline{\lambda_m} \inner{\varphi_{-m}}{w_i} \overline{\inner{\varphi_{-m}}{v_i}} \Bigg| \right]  \\
       &=  \expect\left[ \, \sup_{|\lambda|_{\ell^{\infty}}  \leq C}\,\,  \Bigg|  \frac{1}{n} \sum_{m \in \integers^d}  \overline{\lambda_m} \, \sum_{i=1}^n \sigma_i  \inner{\varphi_{-m}}{w_i} \overline{\inner{\varphi_{-m}}{v_i}} \, \Bigg| \right]\\
       &\leq C \expect\left[   \sum_{m \in \integers^d} \left| \, \frac{1}{n} \sum_{i=1}^n \sigma_i  \inner{\varphi_{-m}}{w_i} \overline{\inner{\varphi_{-m}}{v_i}} \, \right|\right]\\
       &\leq C \sum_{m \in \integers^d}  \frac{1}{n}  \left( \sum_{i=1}^n | \inner{\varphi_{-m}}{w_i} \overline{\inner{\varphi_{-m}}{v_i}}\, |^2 \right)^{1/2},
    \end{split}
\end{equation*}
where the final step uses Khintchine's inequality. Since $|\lambda_m| \leq C$, we can use the same arguments to show that 
\[\text{(ii)} \leq C \sum_{m \in \integers^d} \frac{1}{n} \left(  \sum_{i=1}^n | \inner{\varphi_{-m}}{v_i} \overline{\inner{\varphi_{-m}}{w_i}}\, |^2 \right)^{1/2},\]
and
\[\text{(i)} \leq C^2 \sum_{m \in \integers^d}  \frac{1}{n}  \left(   \sum_{i=1}^n |\inner{\varphi_{-m}}{v_i}|^4 \right)^{1/2}.\]

Next, note that we can bound $|\inner{\varphi_{0}}{u}| \leq B$ for all $\norm{u}_{\Hcal^s} \leq B$. Moreover, Lemma \ref{lem:Fouriercoeff} implies that $|\inner{\varphi_{-m}}{u}| \leq\frac{B}{(2\pi)^s \, |m|_{\infty}^s} $ for all $m \neq \bf{0}$. Thus, we obtain the bound
\begin{equation*}
    \begin{split}
        \text{(i)} &\leq  \frac{B^2C^2}{\sqrt{n}} + C^2\sum_{m \in \integers^d \backslash \{\bf{0}\}}\, \frac{1}{n} \left( \sum_{i=1}^n \frac{B^4}{(2\pi)^{4s}}\,\, \frac{1}{|m|_{\infty}^{4s}} \right)^{1/2} \\
        &\leq B^2C^2 \frac{1}{\sqrt{n}} + \frac{B^2C^2}{(2\pi)^{2s}} \,\frac{1}{\sqrt{n}} \sum_{m \in \integers^d \backslash \{\bf{0}\}} \frac{1}{|m|_{\infty}^{2s}}\\
        &\leq B^2C^2 \frac{1}{\sqrt{n}} + \frac{B^2C^2 \pi^2 3^{d-2}}{(2\pi)^{2s}}\frac{1}{\sqrt{n}},
    \end{split}
\end{equation*}
where the final inequality uses Lemma \ref{lem:lattice_sum}. 
Similar calculations can be done to show that
\[ \text{(ii)}, \text{(iii)} \leq B^2C \frac{1}{\sqrt{n}} + \frac{B^2 C \pi^2 \, 3^{d-2} }{(2\pi)^{2s}} \,\, \frac{1}{\sqrt{n}}\quad \text{ and } \quad  \text{(iv)} \leq B^2 \frac{1}{\sqrt{n}}+ \frac{B^2 \pi^2 3^{d-2} }{(2\pi)^{2s}   }\frac{1}{\sqrt{n}}. \]
Thus, we have overall shown that
\begin{equation*}
    \begin{split}
          \expect \left[\sup_{T \in \Tcal } |r(T) - \widehat{r}(T)| \right] &\leq 2\left( \text{(i)}+\text{(ii)}+\text{(iii)}+ \text{(iv)}\right) \\
          &\leq 2(B^2C^2+2B^2C+B^2)\left( 1+ \frac{ \pi^2 3^{d-2} }{(2\pi)^{2s}   }\right) \frac{1}{\sqrt{n}}\\
          &= \frac{2B^2(C+1)^2}{\sqrt{n}} \, \left( 1+ \frac{ \pi^2 3^{d-2} }{(2\pi)^{2s}   }\right)\\
          &\leq \frac{5}{2}\, \frac{B^2(C+1)^2}{\sqrt{n}}
    \end{split}
\end{equation*}

where we use the fact that
\[\frac{\pi^2 3^{d-2}}{(2\pi)^{2s}}  \leq \frac{1}{2^{2s}}\, \frac{\pi^d}{\pi^{2s}} \leq \frac{1}{2^{2s}}\leq \frac{1}{4}\]
as $2s> d$ and $s\geq 1$.  Therefore, the overall statistical error is
\[3\expect \left[\sup_{T \in \Tcal } |r(T) - \widehat{r}(T)| \right]  \leq \frac{8B^2(C+1)^2}{\sqrt{n}}.\]

\section{Proof of Lower Bound (Theorem \ref{thm:lowerbounds})}\label{appdx:proof_lower}

\begin{proof}
To define a difficult distribution for the learner, we need some notations.
Let
  \[\psi_0 = \varphi_0 \quad \text{ and } \psi_m = \frac{1}{\sqrt{2}}\left(\varphi_{-m}+\varphi_m \right) \quad \text{ for } m \in \integers^d \backslash \{\bf{0}\}.\]
Note that $\psi_m: \torus^d \to \reals$ is a \emph{real-valued} function such that $\norm{\psi_m}_{L^2}=1$. We work with $\psi_m$'s to ensure that the distribution is only supported over real-valued functions.  For any $\{\lambda_k\}_{k \in \integers^d}$ such that $\lambda_k=\lambda_{-k} \in \reals$, the operator $T = \sum_{m \in \integers^d} \lambda_m \varphi_m \otimes \varphi_{-m}$ satisfies
\[T\psi_m = \frac{1}{\sqrt{2}} (\lambda_m \, \varphi_m + \lambda_{-m} \varphi_{-m} ) = \frac{\lambda_m}{\sqrt{2}}\left(\varphi_{-m}+\varphi_m \right) = \lambda_m \psi_m \quad \forall m \in \integers^d \backslash \{\bf{0}\}.\]
Clearly, $T \psi_0 = \lambda_0 \psi_0$. 
Next, let us define a sequence $\{\gamma_m\}_{m \in \integers^d}$ such that
\[\gamma_{0} = \frac{B}{\sqrt{s+1}} \quad \text{ and } \quad \gamma_m = \frac{B}{ \sqrt{s+1}\, |m|_{\infty}^s} \quad \forall m \in \integers^d \backslash \{\bf{0}\}.\]
Finally, define a set 
\[\mathcal{J} = \{m \in \integers^d\,  :\,  \,  m_1 \in \naturals \text{ and } m_j = 0 \quad \forall j \neq 1 \}.\]
For any $M, N\in \naturals$, define  $\mathcal{J}_{M}^N = \{m \in \mathcal{J}\, :\, m_1 \not \equiv 0 \text{ (mod $N$) and } m_1 \leq M  \}$. Let $r \in \integers^d$ such that $r \in \mathcal{J}$ and $r_1 = 1$. That is, $r = (1, 0, 0, \ldots, 0)$. For any $ q \in \integers$, we write $qr = (q, 0, 0, \ldots, 0)$.

We now describe a difficult distribution for the learner. To that end, first draw a $ \xi := \{\xi_{m}\}_{m \in \integers^d}$ such that $\xi_m = \xi_{-m}$ is drawn from $  \text{Uniform}(\{-1,1\})$. Then, given such  $\xi$, let $\mu_{\xi}$ be any joint distribution on $\Vcal \times \Wcal$ such that its marginal on $\Vcal$ assigns $1/3$ mass uniformly on  $\big\{ \gamma_m \psi_m \, :\, m \in \mathcal{J}_M^N\big\},$  $1/3$ mass on $\gamma_0 \psi_0$, and the remaining $1/3$ mass on $\gamma_{(K+j)r}\,  \psi_{(K+j)r}$ for either $j=1$ or $j=2$  ensuring that $K+j \not \equiv 0 $ (mod $N$). Moreover, given a $v= \gamma_k \psi_k$ drawn from the marginal of $\mu_{\xi}$, assign $w \mid v$ to be $ \xi_k \gamma_k \psi_k$ if $k \neq 0$. On the other hand, if $k =0$, then  $w \mid v$ is $\xi_{Nr}\, \gamma_{Nr} \, \psi_{Nr}$. 

This is a valid distribution as 
\begin{equation*}
    \begin{split}
        \norm{v}_{\Hcal^s}^2 = \sum_{k \in \naturals_0^d \, :\,   |k|_{\infty} \leq s } \norm{ \partial^k v }_{L^2}^2 &= \sum_{k \in \naturals_0^d \, :\,   |k|_{\infty} \leq s } (m_1^{k_1} \gamma_m)^2  \indicator[ k_j=0 \, \text{ for all } j \neq 1]\\
        &= \gamma_m^2 \sum_{k_1=0}^{s} |m|_{\infty}^{2k_1} \\
        &\leq (s+1) \gamma_m^{2} |m|_{\infty}^{2s}\\
        &\leq B^2
    \end{split}
\end{equation*}
Similar arguments show that $\norm{w}_{\Hcal^s}^2 \leq  B^2 $. 

Next, we establish that
\[ \expect_{\xi}\left[\mathcal{E}_n(\widehat{T}_{K}^N, \Tcal, \mu_{\xi})\right] \geq  \frac{B^2}{3(s+1)}\left( \frac{1}{8n}+ \frac{2}{(K+2)^{2s}} + \frac{1}{N^{2s}}\right).  \]
Since the lower bound above holds in expectation, we can use the probabilistic method to argue that there must exist a sequence $\xi^{\star}$ such that $\mathcal{E}_n(\widehat{T}_{K}^N, \Tcal, \mu_{\xi^{\star}}) \geq  \frac{B^2}{3(s+1)}\left( \frac{1}{8n}+ \frac{2}{(K+2)^{2s}} + \frac{1}{N^{2s}}\right)$. 

We now proceed with the proof of the claimed lowerbound. Let  $\widehat{T}_K^N$ denote the estimator produced by the algorithm. Then, there exists $\{\widehat{\lambda}_m\}_{m \in \integers^d_{\leq K}}$ such that
\[\widehat{T}_K^N = \sum_{ m \in \integers^d_{\leq K}}\, \widehat{\lambda}_m \,  \, \varphi_m \otimes \varphi_{-m}.\]
For convenience, we will extend the sum to the entire $\integers^d$ and write $\widehat{T}_K^N = \sum_{ m \in \integers^d}\, \widehat{\lambda}_m \,  \, \varphi_m \otimes \varphi_{-m}$, where $\widehat{\lambda}_m = 0$ for all $m \in \integers_{> K}^d$. 

Given a $\xi$, we now lowerbound the expected loss of $\widehat{T}_K^N$ on $\mu_{\xi}$. Using the definition of the distribution $\mu_{\xi}$, we have
\begin{equation*}
    \begin{split}
      &\expect_{(v,w) \sim \mu_{\xi}}\left[\lVert\widehat{T}_K^Nv- w \rVert^2_{L^2} \right] \\
      &=  \frac{1}{3}\, \frac{1}{|\mathcal{J}_M^N|}  \sum_{m \in \mathcal{J}_M^N}\, \left( \widehat{\lambda}_m  - \xi_m\right)^2 \gamma_m^2 + \frac{1}{3} \norm{\widehat{\lambda}_0 \gamma_0 \psi_0 - \xi_{Nr} \,\gamma_{Nr} \,\psi_{Nr} \,}_{L^2}^2+\frac{1}{3}\norm{0 \, \psi_{(K+j)r}-\gamma_{(K+j)r} \, \psi_{(K+j)r}}^2_{L^2}\\
      &\geq \frac{1}{3 |\mathcal{J}_M^N|} \sum_{m \in \mathcal{J}_M^N}\, \gamma_m^2\,  \indicator[\widehat{\lambda}_m \xi_m \leq 0] + \frac{\widehat{\lambda}_0^2 \gamma_0^2 + \gamma_{Nr}^2}{3} \,+ \frac{\gamma_{(K+j)r}^2}{3}\\
      &\geq \frac{\gamma_{r}^2}{3 |\mathcal{J}_M^N|}\,  \indicator[\widehat{\lambda}_{r} \xi_{r} \leq 0]  + \frac{\widehat{\lambda}_0^2 \gamma_0^2 + \gamma_{Nr}^2}{3} \,+ \frac{\gamma_{(K+2)r}^2}{3}.
    \end{split}
\end{equation*}
Here, the first inequality use the fact that $( \widehat{\lambda}_m  - \xi_m)^2 \geq 1$ whenever $\widehat{\lambda}_m \xi_m \leq 0$ and  $\inner{e_0}{e_{Nr}}_{L^2}=0$.
The second inequality uses the fact that $r \in \mathcal{J}_M^N$ as long as $M,N>1$ and that $\gamma^2_{(K+j)r} \geq \gamma^2_{(K+2)r}$ for $j \in \{1,2\}$. 

Next, we establish the upper bound on the loss of the best-fixed operator. Given $\xi $,  define an operator
\[T_{\xi} = \sum_{m \in \integers^d_{>0}} \xi_m  \, \varphi_m \otimes \varphi_{-m}. \]
Clearly, 
\begin{equation*}
    \begin{split}
        \inf_{T \in \Tcal} &\expect_{(v,w) \sim \mu_{\xi}}\left[\norm{Tv- w \rVert^2_{L^2}} \right] \\
        &\leq \expect_{(v,w) \sim \mu_{\xi}}\left[\norm{T_{\xi}v- w \rVert^2_{L^2}} \right]\\
        &=\expect\left[\norm{T_{\xi}v- w \rVert^2_{L^2}} \big| v = \gamma_0 \psi_0 \right] \prob[v = \gamma_0 \psi_0] +  \expect\left[\norm{T_{\xi}v- w \rVert^2_{L^2}} \big| v \neq \gamma_0 \psi_0 \right] \prob[v \neq \gamma_0 \psi_0]\\
        &\leq \norm{0- \xi_{Nr} \,\gamma_{Nr}\, \psi_{Nr}}_{L^2}^2 \, \frac{1}{3} \\
        &\leq \frac{\gamma_{Nr}^2}{3},
    \end{split}
\end{equation*}
where we use the fact that $T_{\xi}v= 0$ whenever $v = \gamma_0 e_0$ and $T_{\xi}v  = w$ otherwise. Overall, we have shown that 
\begin{equation*}
    \begin{split}
        \expect_{(v,w) \sim \mu_{\xi}}&\left[\lVert\widehat{T}_K^Nv- w \rVert^2_{L^2} \right] - \inf_{T \in \Tcal} \expect_{(v,w) \sim \mu_{\xi}}\left[\norm{Tv- w \rVert^2_{L^2}} \right] \\
        &\geq \frac{\gamma_{r}^2}{3|\mathcal{J}_M^N|}\,   \indicator[\widehat{\lambda}_{r} \xi_{r} \leq 0]   + \frac{\widehat{\lambda}_0^2 \gamma_0^2  + \gamma_{Nr}^2}{3} \,+ \frac{\gamma_t^2}{3} - \frac{\gamma_{Nr}^2}{3}\\
        &\geq \frac{1}{3(s+1)}\left(\frac{\indicator[\widehat{\lambda}_{r} \xi_{r} \leq 0]}{|\mathcal{J}_M^N|} + \widehat{\lambda}_0^2 + \frac{B^2}{(K+2)^{2s}} \right),
    \end{split}
\end{equation*}
where the final inequality holds because $\gamma_0= \gamma_{r}  = \frac{B}{\sqrt{s+1}}$ and $\gamma_{(K+2)r} = \frac{B}{\sqrt{s+1} (K+2)^{2s}}$.

Next, we establish lowerbound of $\widehat{\lambda}_0^2.$ To that end, let $S_n = \{(v_i, w_i)\}_{i=1}^n$ denote the $n$ samples accessible to the learner over the uniform grid of size $N$. Recall our notation $ v_i^{N} := \{v_i(x) \, : x \in \operatorname{G} \}$ and $ w_i^N := \{w_i(x) \, : x \in \operatorname{G} \}$ for discretized samples. Take a sample $(v_i, w_i) \sim \mu_{\xi} $. Then, we must have $v_i = \gamma_k \psi_k$ for some $k \in \integers^d$. Consider the case that $k \neq 0$. Then, by definition of the distribution $\mu_{\xi}$, it must be the case that $k \not \equiv 0 \text{ (mod ) N}$. Then, Lemma \ref{lem:DisFour_per} implies that
\[\operatorname{DFT}(v_i^N)(-0)= \frac{1}{N^d} \sum_{x \in \operatorname{G}} \gamma_k\,  \psi_k(x)\, e^{-2\pi \operatorname{i} \inner{x}{0}} = \frac{\gamma_k}{\sqrt{2} N^d}\left(\sum_{x \in \operatorname{G}} e^{-2\pi \operatorname{i} \inner{k}{x}} + \sum_{x \in \operatorname{G}} e^{2\pi \operatorname{i} \inner{k}{x} } \right) = 0.\] 
On the other hand, if $v_i = \gamma_0 \psi_0$, then we have 
\[\operatorname{DFT}(v_i^N)(-0) = \frac{1}{N^d} \sum_{x \in \operatorname{G}} \gamma_0 \psi_0(x) = \frac{\gamma_0}{N^d} \sum_{x \in \operatorname{G}}1 = \gamma_0.\] 
Additionally, when $v_i=\gamma_0 \psi_0$, we have $w_i = \gamma_{Nr} \, \psi_{Nr}$. In this case, Lemma \ref{lem:DisFour_per} implies that
\[\operatorname{DFT}(w_i^N)(-0)= \frac{\gamma_{Nr}}{N^d} \sum_{x \in \operatorname{G}} \psi_{Nr}(x) = \frac{\gamma_{Nr}}{\sqrt{2} N^d}\left( \sum_{x \in \operatorname{G}} e^{-2\pi \operatorname{i} \inner{N r}{x}} + \sum_{x \in \operatorname{G}} e^{2\pi \operatorname{i} \inner{N r}{x}} \right) = \frac{\gamma_{Nr}}{\sqrt{2}} 2 = \sqrt{2} \gamma_{Nr}. \]

Using these facts, we can write the empirical least-square loss as
\begin{equation*}
    \begin{split}
        &\frac{1}{n}\sum_{i=1}^n\, \sum_{m \in \integers^d_{\leq K}} \left|\lambda_m \operatorname{DFT}(v_i^N)(-m)- \operatorname{DFT}(w_i^N)(-m) \right|^2\\
        &= \frac{ |\lambda_0 - \sqrt{2} \, \gamma_{Nr}|^2}{n}\sum_{i=1}^n\indicator[v_i= \gamma_0 \psi_0]\,\,+ \frac{1}{n} \sum_{i=1}^n  \indicator[v_i \neq \gamma_0 \psi_0] \, \left|\operatorname{DFT}(w_i^N)(-m) \right|^2\\
        &+ \frac{1}{n}\sum_{i=1}^n \sum_{m \in \integers^d_{\leq K} \backslash \{\bf{0}\}} \left|\lambda_m \operatorname{DFT}(v_i^N)(-m)- \operatorname{DFT}(w_i^N)(-m) \right|^2
    \end{split}
\end{equation*}
Thus, the least squares estimator for $\lambda_0$ must be $\widehat{\lambda}_0 = \sqrt{2} \gamma_{Nr} $. That is,
\[\widehat{\lambda}_0^2 = 2 \gamma_{Nr}^2= \frac{2B^2}{(s+1)|Nr|_{\infty}^s} =  \frac{2B^2}{(s+1) N^{2s}}. \]
Note that this choice of $\widehat{\lambda}_0$ is valid as $\widehat{\lambda}_0 \leq 1$. 
Thus, so far, we have shown that
\[ \expect_{(v,w) \sim \mu_{\xi}}\left[\lVert\widehat{T}_K^Nv- w \rVert^2_{L^2} \right] -  \expect_{(v,w) \sim \mu_{\xi}}\left[\norm{T_{\xi}v- w \rVert^2_{L^2}} \right] \geq \frac{B^2}{3(s+1)} \left(\frac{\indicator[\widehat{\lambda}_{r} \xi_{r} \leq 0]}{|\mathcal{J}_M^N|} + \frac{2}{N^{2s}} + \frac{1}{(K+2)^{2s}} \right) \]

Our proof will be complete upon establishing that
\[\frac{1}{|\mathcal{J}_M^N|}\, \expect_{\xi} \left[ \expect_{S_n \sim \mu_{\xi}} \left[\indicator[\widehat{\lambda}_{r} \xi_{r} \leq 0] \right] \right] \geq \frac{1}{8n} \]
for an appropriate choice of $M$. 
To that end, let $\mu_{\xi}^{\Vcal}$ be the marginal of $\mu_{\xi}$ on $\Vcal$ and $S_n^{\Vcal} \in \Vcal^{n}$ denote the restriction of samples $S_n \in (\Vcal \times \Wcal)^n$ to its first arguments. Then, we can change the order of expectations to write
\[\expect_{\xi} \left[ \expect_{S_n \sim \mu_{\xi}} \left[\indicator[\widehat{\lambda}_{r} \xi_{r} \leq 0] \right] \right] = \expect_{S_n^{\Vcal}  \sim \mu_n^{\Vcal}} \left[ \expect_{ \xi} \left[\indicator[\widehat{\lambda}_{r} \xi_{r} \leq 0] \right] \right] \geq \frac{1}{2} \, \prob[\gamma_{r} \psi_{r} \notin S_n^{\Vcal}] \, \]
To understand why the final inequality holds, observe that when the event $\gamma_{r} \psi_{r} \notin S_n^{\Vcal}$ occurs, the learner has no information about $\xi_{r}$. This implies that $\xi_{r}$ and $\widehat{\lambda}_{r}$ are independent. Consequently, given that $\gamma_{r} \psi_{r} \notin S_n^{\Vcal}$, the event $\widehat{\lambda}_{r} \xi_{r} \leq 0$ has a probability of at least $1/2$ since $\xi_{r}$ is sampled uniformly from $\{-1,+1\}$. 

Next, it remains to pick $M$ such that 
\[\frac{\prob[\gamma_{r} \psi_{r} \notin S_n^{\Vcal}]}{|\mathcal{J}_M^N|} \geq \frac{1}{4n}.\]
To get this, we choose $M=2n$. It is easy to verify that $|\mathcal{J}_M^N| \geq n$ whenever $N>1$. This is true because no more than half of integers in $\{1,2, \ldots, 2n\}$ are divisible by $N$. 
Thus, we have
\[\prob[\gamma_{r} \psi_{r} \notin S_n^{\Vcal}] = \left(1-\frac{1}{3 |\mathcal{J}_M^N|} \right)^n\geq \left( 1-\frac{1}{3n}\right)^n  \geq \frac{1}{2}\]
for any $n \geq 1$. Noting that $|\mathcal{J}_M^N| \leq 2n$ completes our proof.
\end{proof}

\end{document}